\documentclass{article}

\PassOptionsToPackage{square,numbers}{natbib}



\usepackage[final]{neurips_2024}


\usepackage[utf8]{inputenc} 
\usepackage[T1]{fontenc}    
\usepackage{hyperref}       
\usepackage{url}            
\usepackage{booktabs}       
\usepackage{amsfonts}       
\usepackage{nicefrac}       
\usepackage{microtype}      
\usepackage{xcolor}         
\usepackage{amsmath}
\usepackage{amssymb}
\usepackage{mathtools}
\usepackage{amsthm}

\usepackage{caption}

\theoremstyle{plain}
\newtheorem{theorem}{Theorem}[section]

\newtheorem{lemma}[theorem]{Lemma}

\theoremstyle{definition}
\newtheorem{definition}[theorem]{Definition}

\theoremstyle{remark}

\newcommand{\R}{\mathbb{R}}

\def\SC@figure@vpos{m}

\title{Exploring the Precise Dynamics of Single-Layer GAN Models: Leveraging Multi-Feature Discriminators for High-Dimensional Subspace Learning}

\author{%
  Andrew Bond \\
  KUIS AI Center\\
  Ko\c{c} University\\
  \texttt{abond19@ku.edu.tr} \\
  \And
  Zafer Do\u{g}an \thanks{Corresponding Author}\\
  MLIP Research Group,  KUIS AI Center\\ 
  Electrical and Electronics Engineering \\
  Ko\c{c} University \\
  \texttt{zdogan@ku.edu.tr} \\
}

\begin{document}

\maketitle

\begin{abstract}
Subspace learning is a critical endeavor in contemporary machine learning, particularly given the vast dimensions of modern datasets. In this study, we delve into the training dynamics of a single-layer GAN model from the perspective of subspace learning, framing these GANs as a novel approach to this fundamental task. Through a rigorous scaling limit analysis, we offer insights into the behavior of this model. Extending beyond prior research that primarily focused on sequential feature learning, we investigate the non-sequential scenario, emphasizing the pivotal role of inter-feature interactions in expediting training and enhancing performance, particularly with an uninformed initialization strategy. Our investigation encompasses both synthetic and real-world datasets, such as MNIST and Olivetti Faces, demonstrating the robustness and applicability of our findings to practical scenarios. By bridging our analysis to the realm of subspace learning, we systematically compare the efficacy of GAN-based methods against conventional approaches, both theoretically and empirically. Notably, our results unveil that while all methodologies successfully capture the underlying subspace, GANs exhibit a remarkable capability to acquire a more informative basis, owing to their intrinsic ability to generate new data samples. This elucidates the unique advantage of GAN-based approaches in subspace learning tasks. 
\end{abstract}

\section{Introduction}
\label{intro}
Subspace learning is a widely explored task, especially with the growth of dimensionality in modern datasets. It is important to identify meaningful subspaces within the data, such as those determined by principal component analysis (PCA). However, due to the high dimensionality of the data, it is common to employ online methods such as Oja's method \cite{oja} and GROUSE \cite{grouse}. Meanwhile, Generative Adversarial Networks (GANs) \cite{gan}, primarily used as generative models, have also demonstrated the ability to learn meaningful representations of data \cite{stylegan, styleclip}. Inspired by this, we explore how single-layer GAN models can be viewed as a form of subspace learning.

We seek to improve the understanding of GAN training by relaxing some common assumptions made in previous analysis of GANs \cite{solvable}. Specifically, we focus on the training dynamics of the gradient-based learning algorithms, which can be converted into a continuous-time stochastic process characterized by an ordinary differential equation (ODE). Furthermore, the dynamics of the model weights form a stochastic process modeled by a stochastic differential equation (SDE). Understanding these two equations provides the relevant information to understand convergence behaviour of the training. We extend previous work to discriminators with same dimensionality as the generator. Finally, we discuss the new training outcomes that can arise in the higher-dimensionality case.

Our work explores what happens when the training switches from sequential learning in the single-feature discriminator case \cite{solvable}, to the non-sequential (multi-feature) learning of our discriminator. We show that the non-sequential learning of features not only allows for faster learning and convergence, but also a higher maximum similarity with the true subspace compared to the sequential case, when everything else is kept the same. This shows that contrary to the general approach of making discriminators much weaker than generators, it is still possible to use a powerful discriminator. In fact, doing so can lead to much faster training with better performance, through careful choice of learning rates.

We further show that our new framework can be used to analyze the cases where we assume different dimensionalities between the true subspace, fake subspace, and discriminator. Through the use of a simple uplifting trick on the relevant Grassmannians, we are able to extend our analysis to arbitrary dimensionalities. To understand how GANs compare with existing subspace learning algorithms, we provide both theoretical and empirical comparisons with existing such algorithms. We see that the features learned by a GAN model are more meaningful and represent the data better as compared to Oja's method, due to the requirement of being able to generate new data from the underlying data distribution.

Finally, we test our approach using two prominent real-world datasets: MNIST and Olivetti Faces, comparing our method against a sequential discriminator, and show that all of the key insights gained through the theoretical analysis are visible in training on this dataset as well. This shows that our analysis has very practical applications, and can lead to interesting research directions exploring these ideas in more powerful GAN architectures. This testing is additionally done on the case where we assume different dimensionalities for each component of the model, showing that the results are as expected. We release all our code at \url{https://github.com/KU-MLIP/SolvableMultiFeatureGAN}.

Overall, our contributions are as follows:
\begin{enumerate}
    \item We investigate the distinctions between multi-feature and single-feature discriminators and fully characterize the learning dynamics  through a rigorous scaling limit analysis.
    \item We introduce a novel method for analyzing cases where the true feature dimensionality is unknown, enabling broader future analyses under uncertain conditions.
    \item We position the multi-feature GAN models as a new type of subspace learning algorithm, and compare against existing algorithms, both theoretically and empirically.
    \item We further validate our findings on image datasets (MNIST and Olivetti Faces), highlighting the practical implications of our insights for real-world applications.
\end{enumerate}

\section{Related Work}
\subsection{Training dynamics for GANs}
Much of this work is inspired by \citet{solvable}, which was one of the first to undertake this task. However, in this case, they use a single-feature discriminator, and show that the choice of learning rates relative to the strength of noise is what determines the outcome of the training process: convergence, oscillations, or mode-collapse.

There have been further attempts to understand the convergence of GANs through other approaches, not just the dynamics of the gradient-based optimization. \citet{two-time-gan} showed that under reasonable assumptions on the training process and hyperparameters, using a specific update rule will guarantee that the GAN converges to a local Nash equilibrium. \citet{nash-zero-sum} introduced a new learning algorithm under which the local Nash equilibria are the only attracting fixed points, so that the training algorithm tends to move towards these points. This type of analysis is very similar to our approach, focused on understanding fixed points. Other types of models have also been analyzed in the high-dimensional regime, such as linear VAEs \cite{linear_vae}, two-layer neural networks solving a regression task \cite{phase_diagram}, and two-layer autoencoders \cite{two_layer_autoencoders}. A review of these methods can be found in \cite{unifying_sgd}.
\subsection{Subspace learning}
Subspace learning is a heavily explored field, with many algorithms. However, when the noise of the data has a non-zero variance, most approaches fail, and the general technique used to solve the problem is some type of online PCA-based method. In \citet{subspace-incomplete}, a similar analysis of dynamics through ODEs is performed for multiple algorithms which learn subspaces with non-zero variance of noise. This analysis allows for steady-state and phase transition analysis of these algorithms. \citet{streaming-pca-tracking} presents a survey of different online PCA algorithms used for subspace learning, in the case where only some of the data is visible at each timestep, and discuss how it is possible to find a unique subspace of a given rank which matches all the provided data.
\section{Background and problem formulation}
\subsection{True data model}
Our data $\textbf{y}_{k}$ is drawn from the following generative model, known as a \textit{spiked covariance model} \cite{spiked_covariance}:
\begin{equation}
    \textbf{y}_{k} = \textbf{U} \textbf{c}_{k} + \sqrt{\eta_{T}} \textbf{a}_{k}
\end{equation}
Here, $\textbf{U} \in \R^{n \times d}$ is the true subspace we wish to learn represented as an orthonormal basis, $\textbf{c}_{k} \in \R^{d}$ is a zero-mean random vector with covariance matrix $\Lambda$, $\textbf{a}_{k} \in \R^{n}$ is a standard Gaussian vector, and $\eta_{T}$ represents the noise level. 

The spiked covariance model is very widely studied, due to the non-triviality of learning $\textbf{U}$ whenever $\eta_{T} > 0$. The key property of this model is that the top $d$ eigenvectors of the data covariance $\mathbb{E}[\textbf{y}_{k}\textbf{y}_{k}^{T}]$ are given by the columns of $\textbf{U}$. If there exists a strict eigengap between the top $d$ corresponding eigenvalues and the other eigenvalues, then the reconstruction loss function is proven to have $\textbf{U}$ as a global minima \cite{past}.

\subsection{Online subspace learning algorithms}
Subspace learning is a very important task in machine learning, most commonly performed by algorithms such as PCA or ICA. However, these approaches involve costly operations such as calculating covariance matrices or calculating matrix inverses, infeasible in high dimensions. Therefore, it is very common to use an online version of these algorithms, processing samples one at a time.

Online subspace learning algorithms typically fall into two categories: algebraic methods and geometric methods. Algebraic methods are based on computing the top eigenvectors of some representation of a sample covariance matrix. Assuming a strict eigengap, the top eigenvectors will yield the true subspace. Meanwhile, geometric methods optimize a certain loss function over some geometric space (Euclidean space or a Grassmannian manifold). We review two subspace learning algorithms here, Oja's Method \cite{oja} and GROUSE \cite{grouse}. While GROUSE was introduced for the missing data case, it can be used for full data too. For further details about these algorithms and their categorization, we direct the reader to \cite{pca_missing_data}.

However, we suggest a third category of online subspace learning algorithms, which we call the generative methods. Such methods, including single-layer GANs, do not have information about the specific task, and instead aim to learn the data simply by seeing the data and attempting to generate data from the same distribution.

\subsubsection{Oja's method}
Oja's method \cite{oja} is a classical algebraic approach to online subspace learning. Given an orthonormalized initial matrix $\textbf{X}_{0}$, we perform the following update at every timestep given a data sample $\textbf{y}_{k}$:
\begin{equation}
    \textbf{X}_{k+1} = \Pi\left[\textbf{X}_{k} + \tau \textbf{y}_{k}\textbf{y}_{k}^{T}\textbf{X}_{k} \right]
\end{equation}
Here, $\Pi$ is an orthonormalization operator, and $\tau$ is the learning rate. 

\subsubsection{GROUSE}
GROUSE \cite{grouse} performs gradient descent on the Grassmannian manifold, which guarantees orthonormality of the updates. In the full data case, we again start with an orthonormal initial matrix $\textbf{X}_{0}$, and at each timestep, given a data sample $y_{k}$, our update is:
\begin{equation}
    \textbf{X}_{k+1} = \textbf{X}_{k} + (\cos{\theta_{k}} - 1) \frac{\textbf{p}_{k}}{||\textbf{p}_{k}||} \frac{\textbf{w}_{k}^{T}}{||\textbf{w}_{k}||} + \sin{\theta_{k}} \frac{\textbf{r}_{k}}{||\textbf{r}_{k}||}\frac{\textbf{w}_{k}^{T}}{||\textbf{w}_{k}||}
\end{equation}
Here, $\theta_{k} = \tau ||\textbf{r}_{k}|| ||\textbf{p}_{k}||$, $\textbf{w}_{k} = \arg \min_{\textbf{w}} ||\textbf{y}_{k} - \textbf{X}_{k}\textbf{w}||_{2}^{2}$, $\textbf{p}_{k} = \textbf{X}_{k} \textbf{w}_{k}$, $\textbf{r}_{k} = \textbf{y}_{k} - \textbf{p}_{k}$, and $\tau$ is our learning rate.

\subsection{Generative Models}
Here, we focus specifically on GANs. A GAN model seeks to learn a representation of the underlying subspace through the use of two components: a generator and a discriminator. The generator learns the subspace by trying to generate new samples from the subspace, while the discriminator acts as a classifier, attempting to distinguish data from the true subspace from data produced by the generator. 

Note that measuring performance through cosine similarity can actually be viewed as a way to measure the generalization performance of the generator, as it doesn't depend on any specific instance of generated data and instead provides a concrete measure of how similar the generated data will be.
\subsubsection{Generator}
We assume that the generator also follows a spiked covariance model:
\begin{equation}
    \tilde{\textbf{y}}_{k} = \textbf{V}_{k} \tilde{\textbf{c}}_{k} + \sqrt{\eta_{G}} \tilde{\textbf{a}}_{k}
\end{equation}
However, we do not assume that $\eta_{G} = \eta_{T}$, or that the covariance of $\tilde{\textbf{c}}_{k}$, $\tilde{\Lambda}$, is the same as $\Lambda$. The goal of the generator is to learn $\textbf{V}_{k}$.

\subsubsection{Discriminator}
The learning in the GAN model critically depends on the choice of discriminator, which aims to separate the data from the true and generated subspaces. 

The most common approach when training GANs is to use a discriminator that is weaker than the generator. If the discriminator is too strong, then it will easily learn to distinguish between true and generated samples, leading to vanishing gradients for the generator and thus preventing learning. However, a weak discriminator results in sequential learning, where the generator is only able to learn a subset of the features at a time. In multi-feature cases, this will lead to very slow learning.

Motivated by this, we seek to analyze a model in which the discriminator has the same strength as the generator. Thus, we let $\textbf{W} \in \R^{n \times d}$, and define the discriminator as 
\begin{equation}
    \mathcal{D}(\textbf{y}; \textbf{W}) = \hat{\mathcal{D}}(\textbf{y}^{T}\textbf{W})
\end{equation}

where $\hat{\mathcal{D}}: \R^{n} \rightarrow \R$ is some function (see the assumptions below). Since this discriminator is able to focus on all the features at once, this means the generator is also able to learn every feature at once. This is in contrast to the single-feature case (where $\textbf{W} \in \R^{n}$) analyzed previously. While this is a strong assumption on the discriminator, we show below how this assumption can be relaxed.

\subsubsection{Training procedure}
GAN training is modeled as a two-player minimax game, where the discriminator attempts to maximize some loss function and the generator attempts to minimize it. This is used as a way to learn a "surrogate" subspace which represents the true subspace. Therefore, the GAN model can be seen as a form of subspace learning, except that the focus is on generating new samples from the subspace.

Specifically, let $\mathcal{L}(\textbf{y}, \tilde{\textbf{y}}; \textbf{W})$ be a loss function depending on the discriminator weights, and true and fake samples. If $\mathcal{G}$ denotes the true distribution and $\tilde{\mathcal{G}}$ denotes the generator distribution, the minimax game can be represented as 
\begin{equation*}
    \min_{\textbf{V}} \max_{\textbf{W}} \mathbb{E}_{y \sim \mathcal{G}} \mathbb{E}_{\tilde{y} \sim \tilde{\mathcal{G}}} \mathcal{L}(\textbf{y}, \tilde{\textbf{y}}; \textbf{W}).
\end{equation*}

Following the approach of \citet{solvable}, and in order to compare the sequential and multi-feature cases, we use the following loss function:
\begin{equation}
    \mathcal{L}(\textbf{y}, \tilde{\textbf{y}}; \textbf{W}) = F(\hat{D}(\textbf{y}^{T}\textbf{W})) - \hat{F}(\hat{D}(\tilde{\textbf{y}}^{T}\textbf{W})) - \frac{\lambda}{2} tr(H(\textbf{W}^{T}\textbf{W})) + \frac{\lambda}{2} tr(H(\textbf{V}^{T}\textbf{V}))
\end{equation}
Here, $F, \hat{F}$ are functions affecting the outputs of the discriminator, $H$ is an element-wise function used for regularizing the weights of the generator and discriminator, and $\lambda > 0$ controls the strength of the regularization. As $\lambda \rightarrow \infty$, the matrices $\textbf{V}, \textbf{W}$ will become orthonormal. 

The standard approach to solve this minimax game is using stochastic gradient descent (SGD). At timestep $k$, given a sample $\textbf{y}_{k}$ from the true subspace and a sample $\tilde{\textbf{y}}_{k}$ from the generator subspace, we perform the following updates:
\begin{equation}
    \begin{split}
        \textbf{V}_{k+1} &= \textbf{V}_{k} - \frac{\tilde{\tau}}{n} \nabla_{\textbf{V}_{k}} \mathcal{L}(\textbf{y}_{k}, \tilde{\textbf{y}_{k}}; \textbf{W}_{k}), \\
        \textbf{W}_{k+1} &= \textbf{W}_{k} + \frac{\tau}{n} \nabla_{\textbf{W}_{k}} \mathcal{L}(\textbf{y}_{k}, \tilde{\textbf{y}_{k}}; \textbf{W}_{k}).
    \end{split}
\end{equation}
Here, $\tau$ denotes the learning rate of the discriminator, and $\tilde{\tau}$ denotes the learning rate of the generator. Note that while it is common to use a batch of data at a time when using SGD, we focus on a single element at a time in order to simplify all the analysis.

\section{Development of ODE}
\label{sect:training_dynamics}
Similar to \cite{solvable}, we make the following definitions:
\begin{definition}
    $\textbf{X}_{k} \vcentcolon = \left[ \textbf{U}, \textbf{V}_{k}, \textbf{W}_{k} \right] \in \R^{n \times 3d}$ is called the \textit{microscopic state} of the training process at time $k$.
\end{definition}

\begin{definition}
    The tuple $\{ \textbf{P}_{k}, \textbf{Q}_{k}, \textbf{R}_{k}, \textbf{S}_{k}, \textbf{Z}_{k}\}$ is called the \textit{macroscopic state} of $\textbf{X}_{k}$ at time $k$, where $\textbf{P}_{k} \vcentcolon = \textbf{U}_{k}^{T}\textbf{V}_{k}$, $\textbf{Q}_{k} \vcentcolon = \textbf{U}^{T} \textbf{W}_{k}$, $\textbf{R}_{k} \vcentcolon = \textbf{V}_{k}^{T} \textbf{W}_{k}$, $\textbf{S}_{k} \vcentcolon = \textbf{V}_{k}^{T} \textbf{V}_{k}$, and $\textbf{Z}_{k} \vcentcolon = \textbf{W}_{k}^{T} \textbf{W}_{k}$. The macroscopic state can be written in matrix notation as $\textbf{M}_{k} = \textbf{X}_{k}^{T}\textbf{X}_{k}$, in which we get
    \begin{equation}
        \textbf{M}_{k} = \begin{bmatrix}
            \textbf{I} & \textbf{P}_{k} & \textbf{Q}_{k} \\
            \textbf{P}_{k}^{T} & \textbf{S}_{k} & \textbf{R}_{k} \\
            \textbf{Q}_{k}^{T} & \textbf{R}_{k}^{T} & \textbf{Z}_{k} 
        \end{bmatrix}.
    \end{equation}
\end{definition}
\subsection{Macroscopic dynamics}
To analyze the macroscopic dynamics, we reduce to a special case, which leads to a slightly modified set of the assumptions from \citet{solvable}.
\begin{enumerate}
    \item [(A.1)] The sequences $\textbf{c}_{k}, \tilde{\textbf{c}}_{k}$ are i.i.d. random variables with bounded moments of all orders, and $\{\textbf{c}_{k} \}$ is independent of $\{ \tilde{\textbf{c}}_{k}\}$.
    \item [(A.2)] The sequences $\{\textbf{a}_{k} \}, \{\tilde{\textbf{a}}_{k} \}$ are both independent Gaussian vectors with zero mean and covariance matrix $I_{n}$.
    \item [(A.3)] $H(\textbf{A}) = \log \cosh{\textbf{A} - \textbf{I}}$, $\hat{D}(\textbf{x}) = ||x||$, and $F(x) = \hat{F}(x) = \frac{x^{2}}{2}$. We note that the first derivative of $H$ exists, the first four derivatives of $F(\hat{D}(\cdot)), \hat{F}(\hat{D}(\cdot))$ exist, and all the derivatives are uniformly bounded. Thus, our choices satisfy the conditions of assumption (A.3) from \citet{solvable}. 
    \item [(A.4)] Let $[ \textbf{U}, \textbf{V}_{0}, \textbf{W}_{0}]$ be the initial microscopic state. For $i = 1, \cdots , n$, we have $\mathbb{E}[\sum_{l=1}^{d}  ([\textbf{U}]_{i, l}^{4} + [ \textbf{V}_{0} ]_{i, l}^{4} + [\textbf{W}_{0}]_{i, l}^{4} )] \leq C/n^{2}$, where $C$ is some constant not depending on $n$.
    \item [(A.5)] The initial macroscopic state $\textbf{M}_{0}$ satisfies $\mathbb{E} ||\textbf{M}_{0} - \textbf{M}_{0}^{*}|| \leq C/\sqrt{n}$, where $\textbf{M}_{0}^{*}$ is a deterministic matrix and $C$ is some constant not depending on $n$.
    \item [(A.6)] The columns of the discriminator matrix $\textbf{W}$ are orthonormal, so that $\textbf{W}^{T}\textbf{W} = \textbf{I}_{d}$.
\end{enumerate}

Assumptions (A1) and (A2) are the usual i.i.d assumptions common in machine learning. (A3) is important for deriving the update equations. (A4) and (A5) are used to guarantee that the macroscopic state can converge. Our assumption (A6) of orthonormal discriminator matrix allows us to simplify the equations since the $\textbf{Z}$ matrix of the macroscopic state is always just $\textbf{I}_{d}$.

Under these assumptions, as well as letting $\lambda \rightarrow \infty$, we obtain a modified Theorem 1 from \citet{solvable}, specifically considering the reduced case of equation (13). Note that our choice of $F, \tilde{F}, \hat{D}$ means that our equations become an arbitrary-dimensional version of the original equations.

\begin{theorem}
    Fix $T > 0$. Under Assumptions (A.1) - (A.6), it holds that
    \begin{equation}
        \max_{0 \leq k \leq nT} \mathbb{E}||\textbf{M}_{k} - \textbf{M}(\frac{k}{n})|| \leq \frac{C(T)}{\sqrt{n}},
    \end{equation}
    where $C(T)$ is some constant depending on $T$ but not $n$, and $\textbf{M}(t): \R_{+} \cup \{0\} \rightarrow \R^{3d \times 3d}$ is a deterministic function. Moreover, $\textbf{M}(t)$ is the unique solution of the following ODE:
    \begin{equation}
        \begin{split}
            \frac{d}{dt} \textbf{P}_{t} &= \text{ } \tilde{\tau}(\textbf{Q}_{t}\textbf{R}_{t}^{T} \tilde{\Lambda} + \textbf{P}_{t}\textbf{L}_{t}), \quad
            \frac{d}{dt} \textbf{Q}_{t} = \text{ } \tau(\Lambda \textbf{Q}_{t} - \textbf{P}_{t}\tilde{\Lambda}\textbf{R}_{t} + \textbf{H}_{t}\textbf{Q}_{t}) \\
            \frac{d}{dt} \textbf{R}_{t} &= \text{ } \tau(\textbf{P}_{t}^{T}\Lambda \textbf{Q}_{t} - \textbf{S}_{t}\tilde{\Lambda}\textbf{R}_{t} + \textbf{H}_{t}\textbf{R}_{t}) + \tilde{\tau}(\tilde{\Lambda} + \textbf{L}_{t})\textbf{R}_{t} \\
            \frac{d}{dt} \textbf{S}_{t} &= \text{ } \tilde{\tau}(\textbf{R}_{t}\textbf{R}^{T}_{t} \tilde{\Lambda} + \tilde{\Lambda}\textbf{R}_{t}\textbf{R}_{t}^{T} + \textbf{S}_{t}\textbf{L}_{t} + \textbf{L}_{t}\textbf{S}_{t}), \quad 
            \frac{d}{dt} \textbf{Z}_{t} = \text{ } \textbf{0}
        \end{split}
    \end{equation}
    with the initial condition $\textbf{M}(0) = \textbf{M}_{0}^{*}$, where
    \begin{equation}
    \begin{split}
        \textbf{L}_{t} &= -diag(\textbf{R}_{t}\textbf{R}_{t}^{T}\tilde{\Lambda}), \quad
        \textbf{H}_{t} = (1 - \frac{\tau \eta_{G}}{2})\textbf{R}_{t}^{T}\tilde{\Lambda}\textbf{R}_{t} - (1 + \frac{\tau \eta_{T}}{2})\textbf{Q}_{t}^{T}\Lambda\textbf{Q}_{t} - \tau \frac{\eta_{G}^{2} + \eta_{T}^{2}}{2}\textbf{I}.
    \end{split}
    \end{equation}
\end{theorem}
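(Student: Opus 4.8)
The plan is to follow the discrete-to-continuous scaling-limit template established for Theorem 1 of \citet{solvable}: since Assumption (A.3) certifies that our choices of $F,\hat F,\hat D,H$ meet the regularity hypotheses of that theorem, the concentration machinery (martingale decomposition plus discrete Gr\"onwall) can be inherited, and the genuinely new work is to carry the scalar, single-feature computation over to the matrix-valued, multi-feature setting and to read off the exact form of the drift. I would therefore organize the argument around (i) a one-step expansion of the macroscopic increment, (ii) identification of its conditional mean with the right-hand side of the stated ODE, (iii) control of the fluctuations, and (iv) a Gr\"onwall closure.

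First I would expand $\textbf M_{k+1}-\textbf M_k$. Writing $\textbf X_{k+1}=\textbf X_k+\Delta\textbf X_k$ with $\Delta\textbf X_k$ supported on the $\textbf V$- and $\textbf W$-blocks, we get $\textbf M_{k+1}-\textbf M_k=\textbf X_k^{T}\Delta\textbf X_k+\Delta\textbf X_k^{T}\textbf X_k+\Delta\textbf X_k^{T}\Delta\textbf X_k$. Because the updates carry a $1/n$ factor while the data vectors $\textbf y_k,\tilde{\textbf y}_k$ have squared norm of order $n$, both the linear terms and the quadratic term $\Delta\textbf X_k^{T}\Delta\textbf X_k$ contribute at order $1/n$; the quadratic piece is the source of the It\^o-type corrections, i.e. the terms proportional to $\eta_T,\eta_G,\eta_T^2,\eta_G^2$ appearing in $\textbf H_t$. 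I would then take the conditional expectation given the current state, using (A.2) to perform the Gaussian integrals over $\textbf a_k,\tilde{\textbf a}_k$ and (A.1) to average the bounded moments of $\textbf c_k,\tilde{\textbf c}_k$; substituting $\mathbb E[\textbf y\textbf y^{T}]=\textbf U\Lambda\textbf U^{T}+\eta_T\textbf I$ and the analogous generator identity, and using (A.6) to set $\textbf Z=\textbf I$, collapses every block increment into a polynomial in $\{\textbf P,\textbf Q,\textbf R,\textbf S\}$. Matching these with the derivatives $\frac{d}{dt}\textbf P_t,\dots,\frac{d}{dt}\textbf S_t$ --- in particular checking that the regularizer gradient of $H=\log\cosh(\cdot)-\textbf I$ in the $\lambda\to\infty$ limit produces exactly the projection terms encoded by $\textbf L_t=-\mathrm{diag}(\textbf R_t\textbf R_t^{T}\tilde\Lambda)$ --- is the bulk of the computation.

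With the drift identified as a map $\textbf F(\textbf M)$, I would write $\textbf M_{k+1}-\textbf M_k=\tfrac1n\textbf F(\textbf M_k)+\tfrac1n\xi_k+\textbf r_k$, where $\xi_k$ is a martingale-difference fluctuation and $\textbf r_k$ is a higher-order remainder. Assumption (A.4) (bounded, delocalized fourth moments of the initial entries, propagated along the trajectory) bounds the conditional second moment of $\xi_k$ by $C/n^{2}$ and bounds $\|\textbf r_k\|$ by $C/n^{2}$, so Doob's inequality makes the accumulated fluctuation over $nT$ steps of order $1/\sqrt n$. Since $\textbf F$ is polynomial, it is Lipschitz on the compact region where the macroscopic state stays bounded; comparing the discrete recursion to the Euler scheme of the ODE and invoking (A.5) for the $O(1/\sqrt n)$ initial error, a discrete Gr\"onwall inequality yields $e_{k+1}\le(1+C/n)e_k+C'/n^{3/2}$ with $e_k=\mathbb E\|\textbf M_k-\textbf M(k/n)\|$, and summing gives the claimed bound. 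Existence and uniqueness of $\textbf M(t)$ on $[0,T]$ follow from Picard--Lindel\"of applied to the (locally Lipschitz, a priori bounded) system.

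The step I expect to be the main obstacle is the second one: the exact matrix bookkeeping of the conditional mean, where the non-commutativity of the blocks and the asymmetry between the discriminator update (rate $\tau$, sign $+$) and the generator update (rate $\tilde\tau$, sign $-$) must be tracked faithfully, and where the order-$1/n$ contribution of the quadratic term $\Delta\textbf X_k^{T}\Delta\textbf X_k$ must be separated from negligible remainders. Getting the diagonal-versus-full-matrix structure right --- so that the regularizer contributes $\textbf L_t$ as a diagonal object while the data terms contribute full matrices --- and confirming that the $\lambda\to\infty$ orthonormalization is consistent with $\frac{d}{dt}\textbf Z_t=\textbf 0$ under (A.6), is where the multi-feature analysis genuinely departs from the scalar case of \citet{solvable}.
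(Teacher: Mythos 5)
Your proposal is correct and follows essentially the same route as the paper: the same drift--martingale--remainder decomposition of $\textbf{M}_{k+1}-\textbf{M}_{k}$, the same moment and Lipschitz verifications, and the same Gr\"onwall-type closure (which the paper packages as a black-box convergence lemma imported from the subspace-tracking literature rather than rederiving via Doob plus discrete Gr\"onwall as you sketch). Your identification of the quadratic term $\Delta\textbf{X}_{k}^{T}\Delta\textbf{X}_{k}$ as the order-$1/n$ source of the It\^o-type corrections in $\textbf{H}_{t}$, and of the conditional-mean matrix bookkeeping as the main new computation, matches where the paper's proof actually does its work.
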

A sketch of the proof of this theorem can be found in Appendix \ref{sect:theorem_proof}. The proof closely mirrors the proof of the original theorem in \cite{solvable}.
\subsection{Microscopic dynamics}
The microscopic dynamics are concerned with how the terms $\textbf{U}, \textbf{V}, \textbf{W}$ change over time. Following previous work, we consider the empirical measure
\begin{equation}
    \mu_{k}(\textbf{U}, \textbf{V}, \textbf{W}) = \frac{1}{n} \sum_{i=1}^{n} \delta([\hat{\textbf{u}}, \hat{\textbf{v}}, \hat{\textbf{w}}] - \sqrt{n}[[\textbf{U}]_{i, :}, [\textbf{V}_{k}]_{i, :}, [\textbf{W}_{k}]_{i, :}]).
\end{equation}
where $\delta$ is the delta measure. This is a discrete-time stochastic process, which can be embedded in continuous time as $\mu_{t}^{(n)} = \mu_{k}$, with $k = \lfloor nt \rfloor$. Then, as $n \rightarrow \infty$, this process converges to a deterministic process $\mu_{t}$, which is the measure of the solution of the SDE
\begin{equation}
    \begin{split}
         d\hat{\textbf{u}}_{t} &= \textbf{0}, \quad
         d\hat{\textbf{v}}_{t} = \tilde{\tau}(\hat{\textbf{w}}_{t}\tilde{\Lambda} \textbf{R}_{t} + \textbf{L}_{t}\hat{\textbf{v}}_{t})dt, \\
         d\hat{\textbf{w}}_{t} &= \tau(\hat{\textbf{u}}^{T}\Lambda \textbf{Q}_{t} + \hat{\textbf{w}} \textbf{h}_{t})dt + \tau A dB_{t},
    \end{split}
\end{equation}
where $A$ is some diffusion term, negligable due to our assumption on the discriminator (A.6). 

From this equation and the convergence of the measure, we can obtain the following weak PDE
\begin{equation}
    \frac{d}{dt} \left\langle \mu_{t}, \varphi(\hat{\textbf{u}}, \hat{\textbf{v}}, \hat{\textbf{w}}) \right\rangle = \tilde{\tau} \left\langle \mu_{t}, \left(\hat{\textbf{w}}_{t} \tilde{\Lambda} \textbf{R}_{t} + \textbf{L}_{t} \hat{\textbf{v}}_{t}\right) \nabla_{\hat{\textbf{v}}} \varphi \right\rangle + \tau \left\langle \mu_{t}, \left(\hat{\textbf{u}}^{T} \Lambda \textbf{Q}_{t} + \hat{\textbf{w}} \textbf{h}_{t}\right) \nabla_{\hat{\textbf{w}}} \varphi \right\rangle
\end{equation}
where $\varphi$ is a bounded, smooth test function.
The ODE in the main theorem can be derived from this weak PDE.

\section{Simulations}
In order to demonstrate that the ODE properly represents the training dynamics of the GAN model, we first perform simulations and show that the empirical results match the ODE, seen in Figure \ref{fig:ode_results}. To understand how the training dynamics change based on the generator learning rate, we fix the discriminator learning rate as $\tau = 0.2$ and fix the generator learning rate $\tilde{\tau} = 0.04$. We show the results on 4 different noise levels. In all cases, we let $\Lambda = \tilde{\Lambda} = diag([\sqrt{3}, \sqrt{5}])$. 

We set $\textbf{P}_{0} = \textbf{Q}_{0} = 0.1 * I$, and we ensure that the empirical setup is initialized with exactly matching $P$ and $Q$ values. We note that the ODE will never learn when the initialization is exactly $0$, and so we must provide some level of similarity to start training. However, this is not very restrictive, as our experiments show that even random matrices will have approximately $0.001 * I$ for both $P$ and $Q$, which is sufficient to escape the fixed point around $0$.
\begin{figure}
    \centering
    \includegraphics[width=1.0\textwidth]{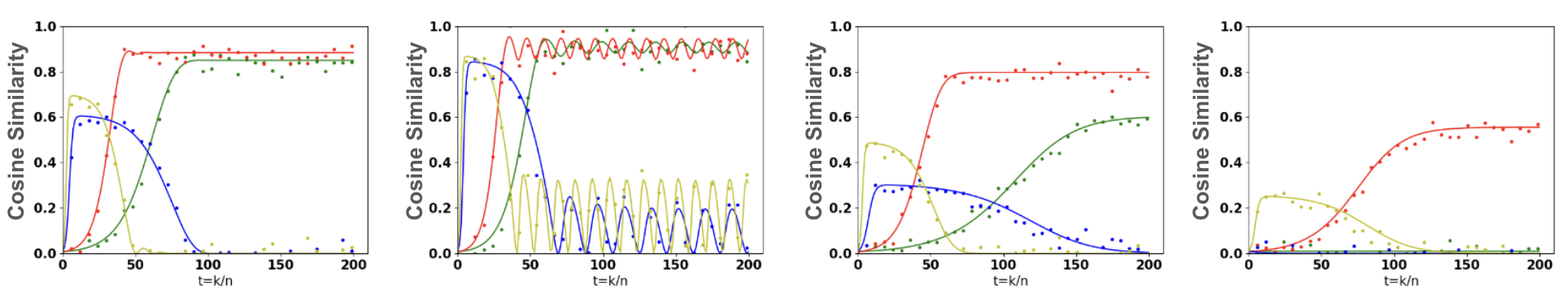}
    \caption{ODE results for learning rate $\tilde{\tau} = 0.04, \tau = 0.2$ and four different noise levels, with $d=2$. The columns represent $\eta_{G} = \eta_{T} = 2, 1, 3, 4$ respectively. At $\eta = 5$ or higher, the generator is unable to learn anything. In all cases, the green and red represent the two diagonals of $\textbf{P}$, and the blue and yellow represent the two diagonals of $\textbf{Q}$. We see that the simulations do match the predicted ODE results. }
    \label{fig:ode_results}
\end{figure}
\subsection{Off-diagonal simulations}
\label{sect:off_diagonals}

A key insight found from the multi-feature discriminator is that the interaction between different features can help learning. When the macroscopic states are initialized to non-diagonal matrices, we see that the dimension with smaller covariance is actually able to attain better results and reach a similar cosine similarity to the dimension with higher covariance. Such an outcome is not possible in the sequential learning regime, due to the lack of interaction between features. In sequential learning, features are learned one at a time, and once a feature has been learned, the training will focus on a different feature instead. This phenomenon can be seen in Figure \ref{fig:ode_results_general}, showing that the off-diagonal initialization allows for not only faster training (which also happens in the diagonal initialization case), but also higher steady-state values compared to the sequential learning case. We are unable to provide a detailed characterization of these fixed points, as a neat closed-form solution cannot be obtained.
\begin{figure}[!h]
    \centering
    \includegraphics[width=\textwidth]{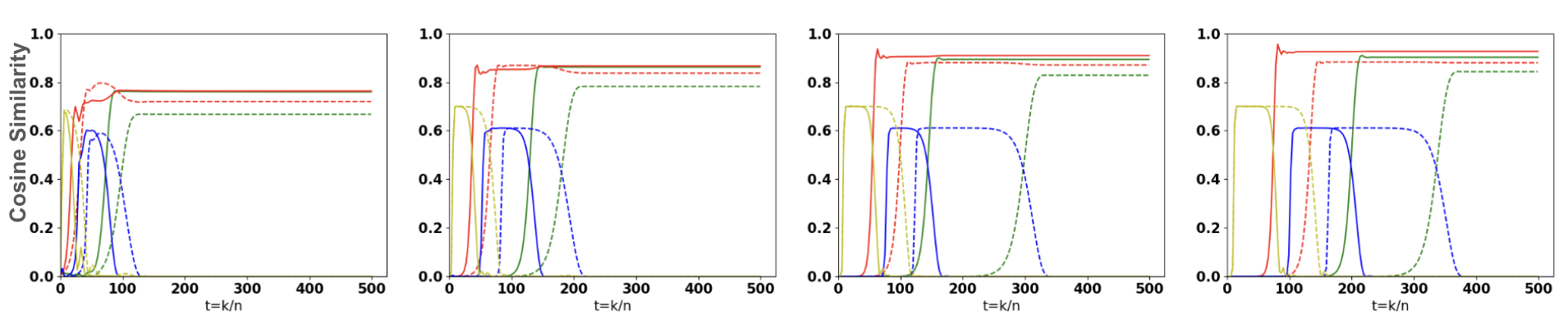}
    \caption{ODE results when initialized with off-diagonal entries. We focus on the case $\eta_{G} = \eta_{T} = 2$, as that noise level is seen above to be ideal for learning. Additionally, in all cases, $\tilde{\tau} = 0.04, \tau = 0.2$. The solid lines are with our approach, while the dashed lines are using the discriminator in \citet{solvable}. From left to right, we use an initialization of $0.1, 0.01, 0.001, 0.0001$ for each component of the macroscopic states. It can be seen that our approach outperforms the single-feature discriminator in every case, with the gap becoming larger as the initialization approaches $0$.}
    \label{fig:ode_results_general}
\end{figure}

\section{Unknown number of features}
\label{sect:diff_dims_theory}
While this type of analysis can provide interesting insights, it has a very restrictive assumption that we know the number of features $d$. This is done so that the macroscopic states are well-defined. However, we now seek to extend this analysis to the case where the true subspace has $d$ features, the generator subspace has $p$ features, and the discriminator learns $q$ features, where we do not assume that $d = p = q$. While this analysis can be performed under any assumptions on the relative size of $d$, $p$, and $q$, we focus on the single case $d \leq q \leq p \ll n$.

To simplify the demonstration of this approach, we make the assumption that $\textbf{U} = \begin{bmatrix}
    \textbf{I}_{d} \\
    \textbf{0}
\end{bmatrix}$, so that $\textbf{U}$ contains the first $d$ standard basis vectors. We introduce the idea of uplifting (inspired by the work in \cite{schubert_varieties}) the matrices $\textbf{U}, \textbf{W}$ to the dimensionality of $\textbf{V}$.

First, since $\textbf{U}$ is an orthonormal matrix, it lives in the Grassmannian $Gr(d, n)$ of $d$-dimensional subspaces of $\R^{n}$. Similarly, $\textbf{W} \in Gr(q, n)$. Our goal is to embed $\textbf{U}$ and $\textbf{W}$ into $Gr(p, n)$. Once we do this, we can again calculate the macroscopic states we are interested in. To do this, we use the following map:
\begin{equation}
    \textbf{U} =  \begin{bmatrix}
    \textbf{I}_{d} \\
    \textbf{0}_{n-d}
\end{bmatrix} \mapsto \begin{bmatrix}
    \textbf{I}_{d} & \textbf{0}_{n - p \times p - d} \\
    \textbf{0}_{n-d \times d} & \textbf{I}_{p}
\end{bmatrix}.
\end{equation}
This produces a new matrix $\bar{\textbf{U}} \in Gr(p, n)$. We can perform a similar trick with $\textbf{W}$ to obtain a matrix $\bar{W}$. The important details about this uplifting trick are the following: (1) Due to the construction, we preserve orthonormality of all the matrices, (2) the subspaces of interest are found as the first $d$ columns of the matrix $\bar{\textbf{U}}$ and the first $q$ columns of the matrix $\bar{\textbf{W}}$, and (3) the analysis of the diagonal case is unchanged under this uplifting (In the diagonal case, there is no interaction between the different dimensions, so we ignore the other dimensions. In the non-diagonal case, these additional dimensions only provide minor noise, and so don't affect the training at all). 

\section{Real image subspace learning}
In order to demonstrate the practicality of this analysis, we test our approach on the MNIST \cite{mnist} and Olivetti Faces \cite{olivetti} dataset, and compare our approach with the single-feature discriminator from \citet{solvable}. Here, we include some qualitative results regarding the learned features, and provide a quantitative analysis on the performance differences between the multi-feature and single-feature discriminators. We include the Olivetti Faces results in Figure \ref{tab:eigenfaces_results}, and the MNIST results can be found in Appendix \ref{sect:mnist_results}.

\begin{figure}
    \centering
    \includegraphics[width=0.5\linewidth]{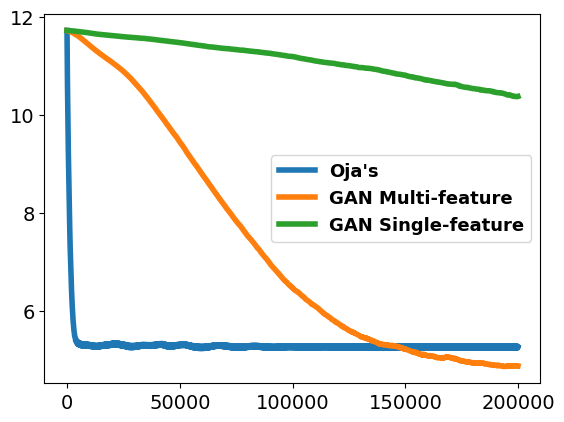}
    \caption{The graph shows the Grassmann distance over time on the Olivetti Faces dataset, for Oja's method (Blue) and the GAN model (Orange), as well as the single-feature GAN model (Green). We use the same hyperparameters as all previous experiments, measured with respect to a full PCA decomposition which acts as a surrogate for the true subspace.}
    \label{fig:eigenfaces-graph}
\end{figure}

\setlength{\tabcolsep}{0pt}
\setlength{\tabcolsep}{0pt}
\begin{center}
\begin{table}
    \centering
    \begin{tabular}{cccc}
         & \textbf{GAN Multi-feature} & \textbf{GAN Single-Feature} & \textbf{Oja} \\
         \rotatebox{90}{\quad \quad \quad\textbf{Epoch 1}}& \includegraphics[width=0.34\textwidth]{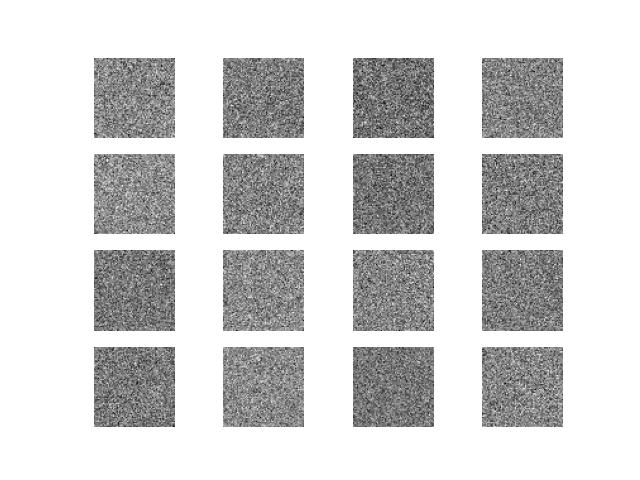} & \includegraphics[width=0.34\textwidth]{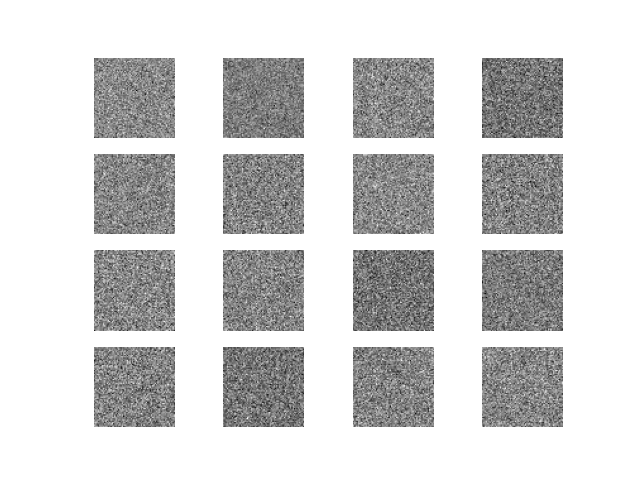} & \includegraphics[width=0.34\textwidth]{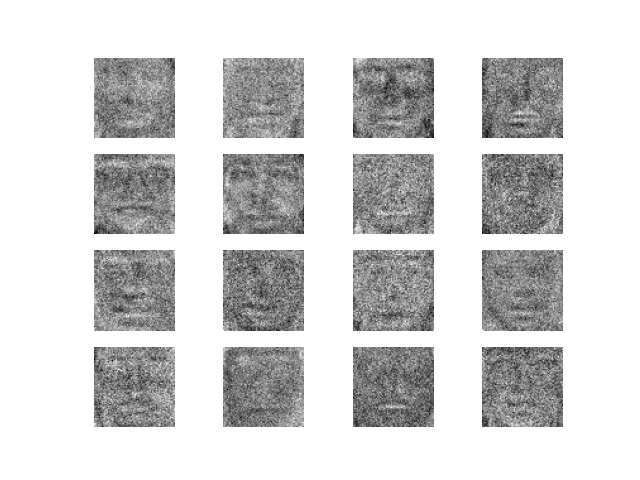}\\
         \rotatebox{90}{\quad \quad \quad \textbf{Epoch 200}} &\includegraphics[width=0.34\textwidth]{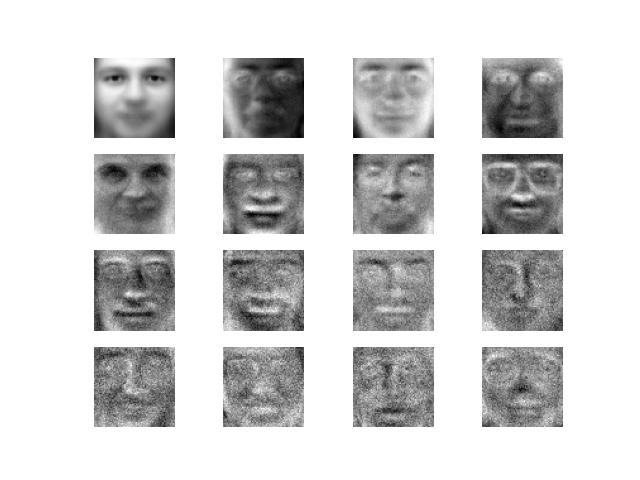} & \includegraphics[width=0.34\textwidth]{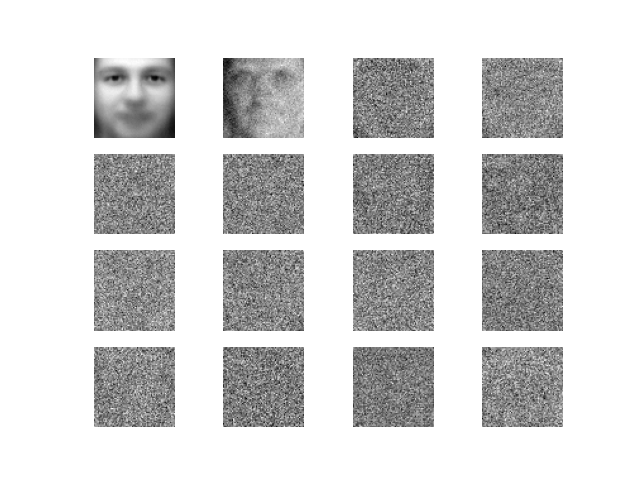} & \includegraphics[width=0.34\textwidth]{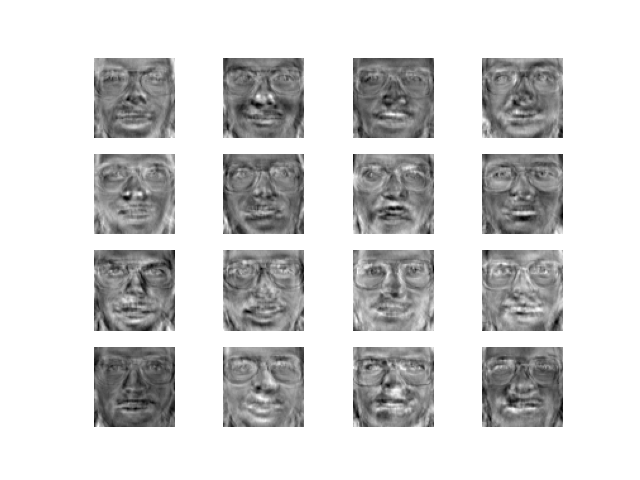} \\
         \rotatebox{90}{\quad \quad\textbf{Final Epoch}} & \includegraphics[width=0.34\textwidth]{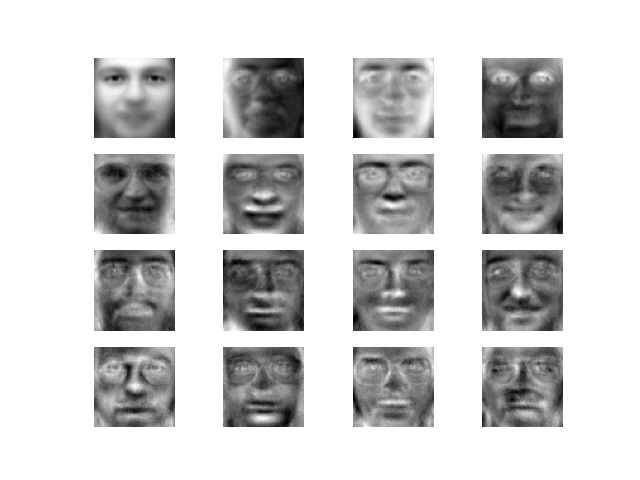} & \includegraphics[width=0.34\textwidth]{figures/eigenfaces/eigenfaces_gan_epoch_final.png} & \includegraphics[width=0.34\textwidth]{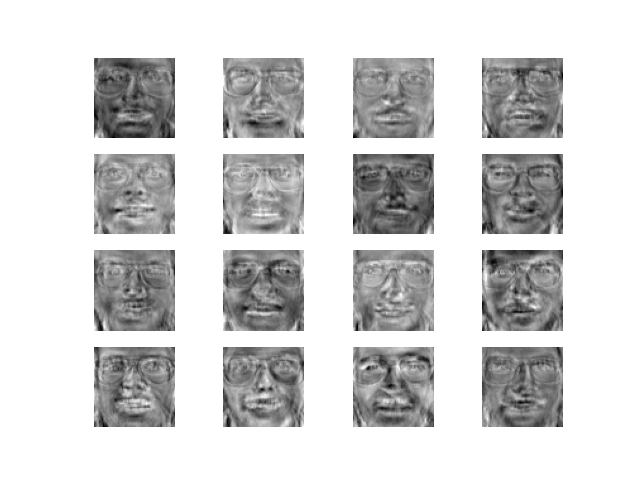} \\
    \end{tabular}
    \captionof{figure}{We provide results on the Olivetti Faces dataset, a well-known dataset. We show the top 16 learned features for all approaches at 3 stages of training: after the 1st epoch, the 200th epoch, and the end of training. We train all approaches for 500 epochs, equivalent to approximately 50 timesteps of simulated training. It can be clearly seen that while Oja's method learns quicker than the GAN model, eventually the GAN model outperforms it. Additionally, we see that the features learned by the GAN model are much more diverse and meaningful than those learned by Oja's method (whose learned features are more similar). For the single-feature GAN model, we can see that the learning is significantly slower, and never approaches anywhere close to the other two results.}
    \label{tab:eigenfaces_results}
\end{table}
\end{center}

To perform these visualizations and measure performance, we first perform PCA on the entire dataset and extract the top $K$ (16 or 36) features. We then use this as an approximation of the true subspace $\textbf{U}$, which allows us to compare the distances. We then track the Grassmann distance between the true and learned subspaces for both the multi-feature and single-feature approaches. The Grassmann distance between two $d$-dimensional subspaces of an $n$-dimensional space is given by
\begin{equation}
    d(\textbf{U}, \textbf{V}) = \left(\sum_{i=1}^{d} \theta_{i}^{2}\right)^{1/2},
\end{equation}
where the $\theta_{i}$ are the principal angles between the subspaces. Here, a lower distance means a better similarity between the subspaces. If the two matrices are orthonormal, the principal angles are the singular values of the cosine similarity matrix, explicitly connected with the macroscopic states.Figure \ref{fig:eigenfaces-graph} shows the Grassmann distances for the sequential and multi-feature learning cases on the Olivetti Faces dataset. This provides empirical justification on a real dataset, showing first that the phenomenon of faster training identified by the ODE in Figure \ref{fig:ode_results} applies to practical settings as well. Furthermore, due to having no restrictions on off-diagonal entries of the macroscopic states, we see that the results in Figure \ref{fig:ode_results_general} also apply to practical datasets, since our multi-feature discriminator attains better performance even in less time.
\section{GANs as a subspace learning algorithm}
In the linear setting, GANs attempt to perform subspace learning. However, GANs do not fall into either of the categories introduced earlier. The other subspace learning algorithms all seek to minimize the following loss function
\begin{equation}
    J(\textbf{U}) = \mathbb{E}_{\textbf{x}}\left[ \textbf{x} - \textbf{U}\textbf{U}^{T}\textbf{x}\right]
\end{equation}
known as the reconstruction error. This is because the global optima of this loss function is the true subspace itself, and so, we can view this as a prior included in the subspace learning algorithms. GANs do not have such information, and instead seeks to learn the subspace simply through seeing the datapoints. Therefore, we can consider GANs to be a third type of subspace learning algorithm, which we call the generative algorithms. We seek to understand how well the GAN model is able to learn a subspace compared to the existing subspace learning algorithms. We compare both analytically using the derived ODEs, as well as empirically on synthetic and the MNIST dataset, in order to see under what circumstances GANs learn a subspace at a comparable rate.

\subsection{Learned features}
Figure \ref{fig:oja_gan_learned_features} in the Appendix compares the features learned by the GAN model to the features learned by Oja's method. Both models are initialized to exactly the same weights, and trained on the same data at the same time, for a single epoch. For the GAN model, we use the same hyperparameters as the previous experiments above. For Oja's method, we used a learning rate of $0.1$, which experimentally we found to produce the best results. We can clearly see that the features learned by the GAN model are more meaningful and more clearly resemble the true data, while most of the features that Oja's method learns aren't very interpretable. This suggests that because the GAN needs to be able to generate the images, this acts as a form of regularization on what types of features are learned. 

\section{Conclusion}

Our investigation into single-layer GAN models through the lenses of online subspace learning and scaling limit analysis has provided valuable insights into their data subspace learning dynamics. By extending our analysis to include multi-feature discriminators, we've unearthed novel phenomena pertaining to the interactions among different features, significantly enhancing learning efficiency.  This advantage is particularly pronounced in scenarios of near-zero initialization, where the generator achieves higher maximum and steady-state performances compared to the sequential discriminator. Moreover, the interaction between dimensions enables the generator to closely match variances across dimensions, a feat unattainable in the sequential scenario. In the context of subspace learning, we see that in higher noise levels, the GAN is able to more consistently outperform Oja's method on a wide range of generator, discriminator, and Oja learning rates.

Introducing an uplifting method for analysis in arbitrary dimensionalities enables us to better model uncertainties inherent in real-world subspace modeling. Practical validation on the MNIST and Olivetti Faces datasets reaffirms the applicability of our theoretical findings, underscoring the superiority of overparametrization in single-layer GANs over data availability. This prompts intriguing avenues for research in multi-layer GANs, probing whether similar phenomena persist in more complex architectures. Exploring these directions holds promise for further advancements in the field. Finally, we observe that GAN models excel in acquiring a more meaningful feature basis compared to Oja's method when applied to the real-world datasets, which we attribute to their ability to generate new data samples.

\section*{Acknowledgements}
We acknowledge that this work was supported in part by TUBITAK 2232 International Fellowship for Outstanding Researchers Award (No. 118C337) and an AI Fellowship provided by Ko\c{c} University \& \.{I}\c{s} Bank Artificial Intelligence (KUIS AI) Research Center.

\bibliographystyle{unsrtnat}
\bibliography{bib}

\begin{thebibliography}{20}
\providecommand{\natexlab}[1]{#1}
\providecommand{\url}[1]{\texttt{#1}}
\expandafter\ifx\csname urlstyle\endcsname\relax
  \providecommand{\doi}[1]{doi: #1}\else
  \providecommand{\doi}{doi: \begingroup \urlstyle{rm}\Url}\fi

\bibitem[Oja(1982)]{oja}
Erkki Oja.
\newblock Simplified neuron model as a principal component analyzer.
\newblock \emph{Journal of Mathematical Biology}, 15:\penalty0 267--273, 1982.

\bibitem[Balzano et~al.(2010)Balzano, Nowak, and Recht]{grouse}
Laura Balzano, Robert~D. Nowak, and Benjamin Recht.
\newblock Online identification and tracking of subspaces from highly incomplete information.
\newblock \emph{2010 48th Annual Allerton Conference on Communication, Control, and Computing (Allerton)}, pages 704--711, 2010.

\bibitem[Goodfellow et~al.(2014)Goodfellow, Pouget-Abadie, Mirza, Xu, Warde-Farley, Ozair, Courville, and Bengio]{gan}
Ian~J. Goodfellow, Jean Pouget-Abadie, Mehdi Mirza, Bing Xu, David Warde-Farley, Sherjil Ozair, Aaron~C. Courville, and Yoshua Bengio.
\newblock Generative adversarial networks.
\newblock \emph{Communications of the ACM}, 63:\penalty0 139 -- 144, 2014.

\bibitem[Karras et~al.(2018)Karras, Laine, and Aila]{stylegan}
Tero Karras, Samuli Laine, and Timo Aila.
\newblock A style-based generator architecture for generative adversarial networks.
\newblock \emph{2019 IEEE/CVF Conference on Computer Vision and Pattern Recognition (CVPR)}, pages 4396--4405, 2018.

\bibitem[Patashnik et~al.(2021)Patashnik, Wu, Shechtman, Cohen-Or, and Lischinski]{styleclip}
Or~Patashnik, Zongze Wu, Eli Shechtman, Daniel Cohen-Or, and Dani Lischinski.
\newblock Styleclip: Text-driven manipulation of stylegan imagery.
\newblock \emph{2021 IEEE/CVF International Conference on Computer Vision (ICCV)}, pages 2065--2074, 2021.

\bibitem[Wang et~al.(2018{\natexlab{a}})Wang, Hu, and Lu]{solvable}
Chuang Wang, Hong Hu, and Yue~M. Lu.
\newblock A solvable high-dimensional model of gan.
\newblock In \emph{Neural Information Processing Systems}, 2018{\natexlab{a}}.
\newblock URL \url{https://api.semanticscholar.org/CorpusID:44033959}.

\bibitem[Heusel et~al.(2017)Heusel, Ramsauer, Unterthiner, Nessler, and Hochreiter]{two-time-gan}
Martin Heusel, Hubert Ramsauer, Thomas Unterthiner, Bernhard Nessler, and Sepp Hochreiter.
\newblock Gans trained by a two time-scale update rule converge to a local nash equilibrium.
\newblock In \emph{Neural Information Processing Systems}, 2017.

\bibitem[Mazumdar et~al.(2019)Mazumdar, Jordan, and Sastry]{nash-zero-sum}
Eric~V. Mazumdar, Michael~I. Jordan, and S.~Shankar Sastry.
\newblock On finding local nash equilibria (and only local nash equilibria) in zero-sum games.
\newblock 2019.

\bibitem[Ichikawa and Hukushima(2023)]{linear_vae}
Yuma Ichikawa and Koji Hukushima.
\newblock Learning dynamics in linear vae: Posterior collapse threshold, superfluous latent space pitfalls, and speedup with kl annealing.
\newblock In \emph{International Conference on Artificial Intelligence and Statistics}, 2023.

\bibitem[Veiga et~al.(2022)Veiga, Stephan, Loureiro, Krzakala, and Zdeborov'a]{phase_diagram}
Rodrigo Veiga, Ludovic Stephan, Bruno Loureiro, Florent Krzakala, and Lenka Zdeborov'a.
\newblock Phase diagram of stochastic gradient descent in high-dimensional two-layer neural networks.
\newblock \emph{Journal of Statistical Mechanics: Theory and Experiment}, 2023, 2022.

\bibitem[Refinetti and Goldt(2022)]{two_layer_autoencoders}
Maria Refinetti and Sebastian Goldt.
\newblock The dynamics of representation learning in shallow, non-linear autoencoders.
\newblock \emph{Journal of Statistical Mechanics: Theory and Experiment}, 2023, 2022.

\bibitem[Arnaboldi et~al.(2023)Arnaboldi, Stephan, Krzakala, and Loureiro]{unifying_sgd}
Luca Arnaboldi, Ludovic Stephan, Florent Krzakala, and Bruno Loureiro.
\newblock From high-dimensional \& mean-field dynamics to dimensionless odes: A unifying approach to sgd in two-layers networks.
\newblock \emph{Proceedings of Machine Learning Research}, vol 195:1–29, 2023.

\bibitem[Wang et~al.(2018{\natexlab{b}})Wang, Eldar, and Lu]{subspace-incomplete}
Chuang Wang, Yonina~C. Eldar, and Yue~M. Lu.
\newblock Subspace estimation from incomplete observations: A high-dimensional analysis.
\newblock \emph{IEEE Journal of Selected Topics in Signal Processing}, 12:\penalty0 1240--1252, 2018{\natexlab{b}}.

\bibitem[Balzano et~al.(2018{\natexlab{a}})Balzano, Chi, and Lu]{streaming-pca-tracking}
Laura Balzano, Yuejie Chi, and Yue~M. Lu.
\newblock Streaming pca and subspace tracking: The missing data case.
\newblock \emph{Proceedings of the IEEE}, 106:\penalty0 1293--1310, 2018{\natexlab{a}}.

\bibitem[Johnstone and Lu(2009)]{spiked_covariance}
Iain~M. Johnstone and Arthur Y.~C. Lu.
\newblock On consistency and sparsity for principal components analysis in high dimensions.
\newblock \emph{Journal of the American Statistical Association}, 104:\penalty0 682 -- 693, 2009.
\newblock URL \url{https://api.semanticscholar.org/CorpusID:30706008}.

\bibitem[Yang(1995)]{past}
Bin Yang.
\newblock Projection approximation subspace tracking.
\newblock \emph{IEEE Trans. Signal Process.}, 43:\penalty0 95--107, 1995.

\bibitem[Balzano et~al.(2018{\natexlab{b}})Balzano, Chi, and Lu]{pca_missing_data}
Laura Balzano, Yuejie Chi, and Yue~M. Lu.
\newblock Streaming pca and subspace tracking: The missing data case.
\newblock \emph{Proceedings of the IEEE}, 106:\penalty0 1293--1310, 2018{\natexlab{b}}.

\bibitem[Ye and Lim(2014)]{schubert_varieties}
Ke~Ye and Lek-Heng Lim.
\newblock Schubert varieties and distances between subspaces of different dimensions.
\newblock \emph{SIAM J. Matrix Anal. Appl.}, 37:\penalty0 1176--1197, 2014.

\bibitem[Deng(2012)]{mnist}
Li~Deng.
\newblock The mnist database of handwritten digit images for machine learning research.
\newblock \emph{IEEE Signal Processing Magazine}, 29\penalty0 (6):\penalty0 141--142, 2012.

\bibitem[Cambridge()]{olivetti}
AT\&T~Laboratories Cambridge.
\newblock The database of faces.
\newblock URL \url{https://www.cl.cam.ac.uk/research/dtg/attarchive/facedatabase.html}.

\end{thebibliography}

\newpage
\appendix
\onecolumn
\section{MNIST results}
\label{sect:mnist_results}

\subsection{Comparisons with Single-Feature Discriminator and Oja's Method}
In order to demonstrate that our approach works in more practical settings, we train our model on the MNIST dataset. Then, in order to understand what the model has learned about the dataset, we compute the SVD of the generator weights $\textbf{V}$, and plot the left singular vectors. Each of these vectors correspond to a single feature learned by the model, and so viewing these will help understand the model performance. Finally, we perform the same tests using the single-feature discriminator, to demonstrate the effects of sequential vs non-sequential learning of features. \\ \\
Our theory and development in the paper operated under the assumption that we knew everything about the true subspace. While this is not possible for these image datasets (since we cannot determine the true subspace $\textbf{U}$ or the distribution of $\textbf{c}$), we can still use the same assumptions and model structure. Therefore, the generator still samples a $\tilde{\textbf{c}}$ from a standard Gaussian distribution, and the choice of covariance and noise levels are determined through testing. \\ \\
The dataset is flattened into a $1 \times 784$ vector, so our ambient dimension $n = 784$. For our multi-feature model, we train for a single epoch. For the sequential discriminator, we train for $5$ epochs. We focus on the $d=36$ case, although it can be further scaled up as necessary. Through testing, we fix the covariance matrix $\tilde{\Lambda} = 5 * \textbf{I}_{d}$ and $\eta_{G} = 1$. We use a generator learning rate of $\tilde{\tau} = 0.04$ and a discriminator learning rate of $\tau = 0.2$. 
While the multi-feature discriminator is able to learn good representations of all 36 basis elements as seen in Figure \ref{fig:mnist_results_36_features}, the sequential discriminator is unable to learn even half of them in the $5$ epochs. As can be seen, the last 18 basis elements are just noise. \\ \\
This scaling becomes very problematic as the number of features increases. Even with just $36$ features, a small amount given modern datasets, such a model requires significantly more training and is still unable to perform as well as the multi-feature model. 

Finally, we provide a comparison of the GAN learned features with the Oja's learned features in Figure \ref{fig:oja_gan_learned_features}. It can be seen that most of the GAN features are more visually representative of the dataset compared to Oja's method.

\begin{figure}[!h]
    \centering
    \includegraphics[width=1.0\textwidth]{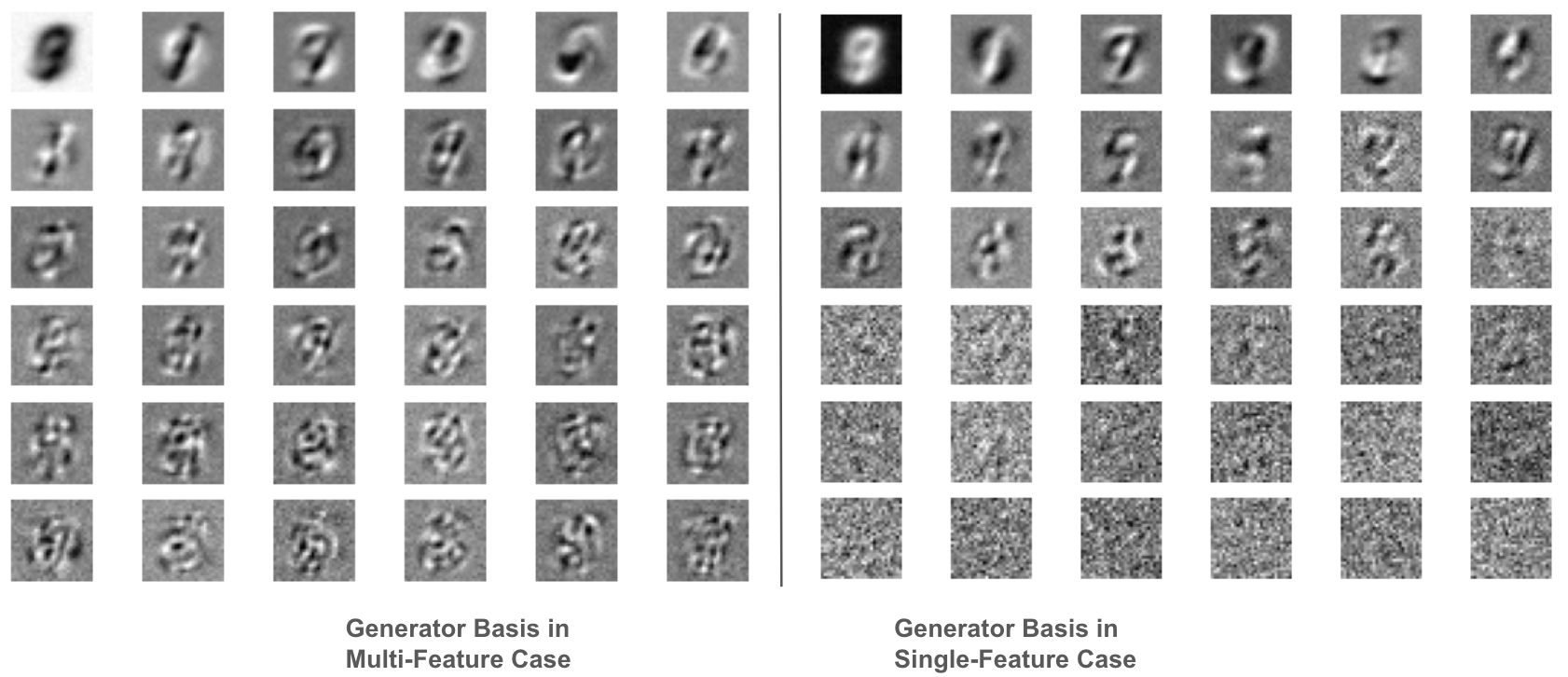}
    \caption{Comparison between the generator basis vectors learned by the multi-feature and single-feature discriminators on 36 features. The multi-feature model is trained for 1 epoch, while the single-feature model is trained for 5 epochs.}
    \label{fig:mnist_results_36_features}
\end{figure} 

\begin{figure}[!h]
    \centering
    \includegraphics[width=1\textwidth]{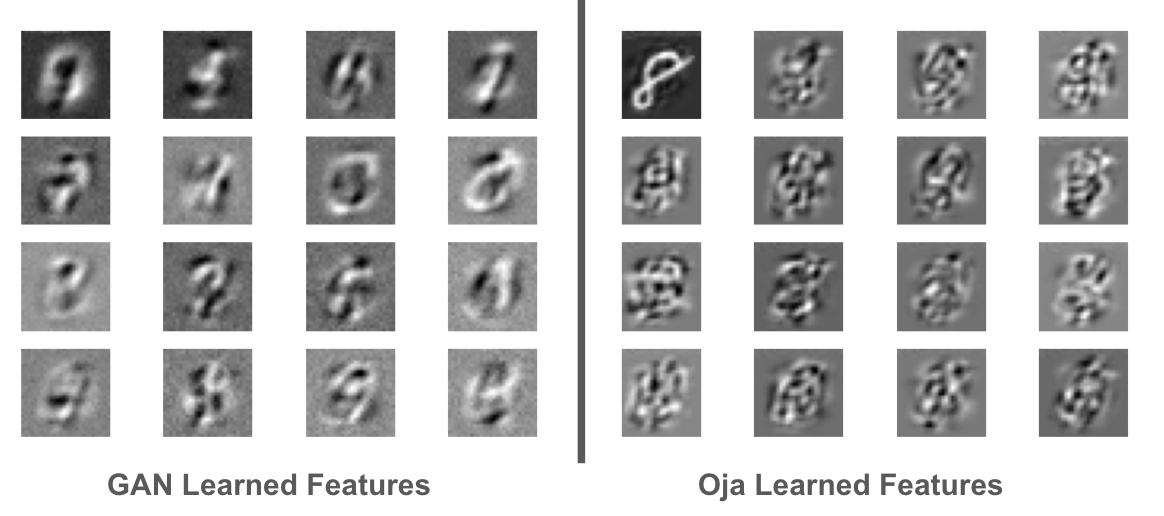}
    \caption{After training both GAN and Oja's method on the MNIST dataset with 16 features, we use SVD to extract their learned features, and compare them here. We can see that despite Oja's method learning quicker than the GAN model (seen in previous analysis), the GAN features learned are a better representation of the data, while most of Oja's features do not resemble the data. Note that Oja learning an $8$ as the first feature is due to the order of training samples seen.}
    \label{fig:oja_gan_learned_features}
\end{figure}

\subsection{Grassmann distances}
Figure \ref{fig:grassmann_distances} contains a comparison of the Grassmann distances for the multi-feature and single-feature cases. Even after 5 epochs, the sequential discriminator still has a much higher Grassmann distance than the multi-feature model, even though it has seen $5$ times as much data. Specifically, after one epoch of training, the multifeature discriminator has a distance of $2.46$, while the single-feature discriminator finishes with a distance of $3.17$, showing a significant gap. We tested with up to $20$ epochs, but saw no improvements for the sequential discriminator past $5$ epochs.

We also see an example of the training outcomes predicted by the ODE in Figure \ref{fig:grassmann_distances}. Specifically, our choice of learning rates $\tilde{\tau} = 0.04, \tau = 0.2$ and noise level $\eta = 1$ is seen in Row 1, Column 2 of Figure \ref{fig:ode_results}, and we see the expected result of the generator oscillating around its steady state.
\begin{figure}[!h]

    \centering
    \includegraphics[width=0.5\columnwidth]{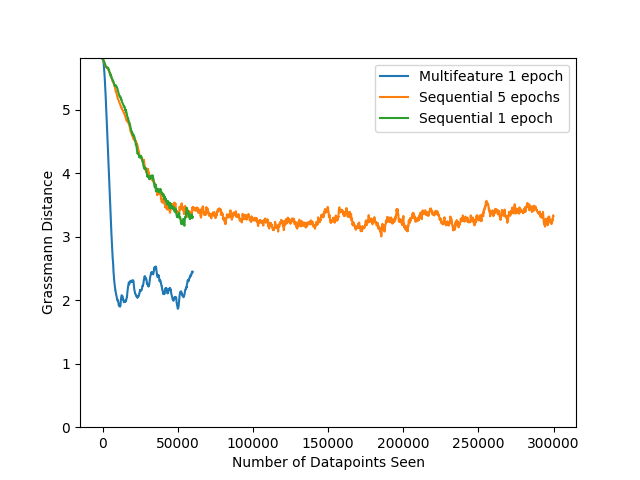}
    \caption{In the first figure, using PCA on the MNIST dataset, we obtain the top 16 features of the data, and use these features as an approximation for the true subspace $\textbf{U}$. We then track the Grassmann distance between the learned subspace and this approximation, for both our model trained for 1 epoch, and the sequential discriminator trained for 1 and 5 epochs. It is clearly seen that our discriminator learns much faster than the sequential discriminator, while at the same time obtaining a much lower distance, even when the sequential discriminator has sees $5$ times as much data.}
    \label{fig:grassmann_distances}
\end{figure}
\section{Proof of main theorem}
\label{sect:theorem_proof}
This proof mirrors the proof of Theorem 1 in \cite{solvable}. However, for completeness, we re-state the key results and sketch the proof here.

The proof relies on the following result, found in \cite{subspace-incomplete}:
\begin{lemma}
\label{lemma:key_result}
    Consider a sequence of stochastic processes $\{ \textbf{x}_{k}^{(n)}, k=0, 1, 2, \cdots, \lfloor nT \rfloor\}_{n=1, 2, \cdots}$ with some constant $T > 0$. If $\textbf{x}_{k}^{(n)}$ can be decomposed into three parts
    \begin{equation}
        \textbf{x}_{k+1}^{(n)} - \textbf{x}_{k}^{(n)} = \frac{1}{n} \phi(\textbf{x}_{k}^{(n)}) + \mathbf{\rho}_{k}^{(n)} + \mathbf{\delta}_{k}^{(n)},
    \end{equation}
    such that
    \begin{enumerate}
        \item [(C.1)] The process $\sum_{k'=0}^{k} \mathbf{\rho}_{k'}^{(n)}$ is a martingale, and $\mathbb{E}||\mathbf{\rho}_{k}^{(n)}||^{2} \leq C(T)/n^{1+\epsilon_{1}}$ for some $\epsilon_{1}>0$;
        \item [(C.2)] $\mathbb{E}||\mathbf{\delta}_{k}^{(n)}|| \leq C(T)/n^{1 + \epsilon_{2}}$ for some $\epsilon_{2}>0$;
        \item [(C.3)] $\phi(\textbf{x})$ is a Lipschitz function, i.e., $||\phi(\textbf{x}) - \phi(\tilde{\textbf{x}}) \leq C ||\textbf{x} - \tilde{\textbf{x}}||$;
        \item [(C.4)] $\mathbb{E} ||\textbf{x}_{k}^{(n)}||^{2} \leq C$ for all $k \leq \lfloor nT \rfloor$;
        \item [(C.5)] $\mathbb{E}||\textbf{x}_{0}^{(n)} - \textbf{x}_{0}^{*}|| \leq C / n^{\epsilon_{3}}$ for some $\epsilon_{3}>0$ and deterministic vector $\textbf{x}_{0}^{*}$,
    \end{enumerate}
    then we have
    \begin{equation}
        ||\textbf{x}_{k}^{(n)} - \textbf{x}(\frac{k}{n})|| \leq C(T) n^{-\min(\frac{1}{2}\epsilon_{1}, \epsilon_{2}, \epsilon_{3})},
    \end{equation}
    where $\textbf{x}(t)$ is the solution of the ODE
    \begin{equation}
        \frac{d}{dt}\textbf{x}(t) = \phi(\textbf{x}(t)) \quad \text{ with } \textbf{x}(0) = \textbf{x}_{0}^{*} .
    \end{equation}
\end{lemma}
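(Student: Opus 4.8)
\section*{Proof proposal}

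The plan is to control the expected error $e_k^{(n)} := \mathbf{x}_k^{(n)} - \mathbf{x}(k/n)$ by a discrete Gr\"onwall argument, after cleanly separating the deterministic Euler discretization error, the martingale fluctuations, and the small residual terms. First I would record that the ODE $\frac{d}{dt}\mathbf{x}(t) = \phi(\mathbf{x}(t))$ with $\mathbf{x}(0) = \mathbf{x}_0^{*}$ admits a unique solution that remains bounded on $[0,T]$: existence and uniqueness are immediate from the Lipschitz hypothesis (C.3) via Picard--Lindel\"of, and boundedness follows from Gr\"onwall. As a consequence $t \mapsto \phi(\mathbf{x}(t))$ is Lipschitz on $[0,T]$, so the forward-Euler residual
\begin{equation}
    r_k := \tfrac{1}{n}\phi(\mathbf{x}(k/n)) - \bigl(\mathbf{x}((k+1)/n) - \mathbf{x}(k/n)\bigr) = \int_{k/n}^{(k+1)/n}\bigl(\phi(\mathbf{x}(k/n)) - \phi(\mathbf{x}(s))\bigr)\,ds
\end{equation}
obeys $\|r_k\| \leq C/n^{2}$.

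Next I would subtract the exact increment $\mathbf{x}((k+1)/n) - \mathbf{x}(k/n) = \tfrac{1}{n}\phi(\mathbf{x}(k/n)) - r_k$ from the assumed decomposition, giving the error recursion
\begin{equation}
    e_{k+1}^{(n)} = e_k^{(n)} + \tfrac{1}{n}\bigl(\phi(\mathbf{x}_k^{(n)}) - \phi(\mathbf{x}(k/n))\bigr) + \rho_k^{(n)} + \delta_k^{(n)} + r_k .
\end{equation}
Unrolling from $k=0$, taking norms, using the Lipschitz bound $\|\phi(\mathbf{x}_k^{(n)}) - \phi(\mathbf{x}(k/n))\| \leq C\|e_k^{(n)}\|$ from (C.3), and taking expectations, I set $g_k := \mathbb{E}\|e_k^{(n)}\|$. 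The initial term is bounded by $C/n^{\epsilon_3}$ via (C.5); the telescoped $\delta$ contribution by $\sum_{k'<k}\mathbb{E}\|\delta_{k'}^{(n)}\| \leq nT\cdot C(T)/n^{1+\epsilon_2}$ via (C.2); and the residual by $\sum_{k'<k}\|r_{k'}\| \leq CT/n$, which is of higher order and folds into the final constant.

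The crux is the accumulated martingale term $\sum_{k'=0}^{k-1}\rho_{k'}^{(n)}$, and this is the one place where the specific hypothesis (C.1) is indispensable. Because the partial sums form a martingale, the increments $\rho_{k'}^{(n)}$ are orthogonal in $L^2$, so the Pythagorean identity gives
\begin{equation}
    \mathbb{E}\Bigl\|\sum_{k'=0}^{k-1}\rho_{k'}^{(n)}\Bigr\|^{2} = \sum_{k'=0}^{k-1}\mathbb{E}\|\rho_{k'}^{(n)}\|^{2} \leq nT\cdot\frac{C(T)}{n^{1+\epsilon_1}},
\end{equation}
after which Jensen's inequality yields $\mathbb{E}\bigl\|\sum_{k'<k}\rho_{k'}^{(n)}\bigr\| \leq C(T)\,n^{-\epsilon_1/2}$. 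This square-root cancellation is exactly what produces the factor $\tfrac{1}{2}\epsilon_1$ in the stated exponent, and getting it right is the main subtlety: a naive term-by-term bound $\mathbb{E}\|\rho_{k'}^{(n)}\| \leq C(T)n^{-(1+\epsilon_1)/2}$ summed over $k' \leq nT$ would lose a full power of $n^{1/2}$ and give the wrong rate.

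Combining the four estimates gives
\begin{equation}
    g_k \leq C(T)\, n^{-\min(\epsilon_1/2,\,\epsilon_2,\,\epsilon_3)} + \frac{C}{n}\sum_{k'=0}^{k-1} g_{k'},
\end{equation}
and the discrete Gr\"onwall inequality closes the argument: since $k \leq nT$, the amplification factor $(1 + C/n)^{k} \leq e^{CT}$ is absorbed into $C(T)$, yielding $\mathbb{E}\|e_k^{(n)}\| \leq C(T)\,n^{-\min(\epsilon_1/2,\epsilon_2,\epsilon_3)}$ uniformly over $k \leq \lfloor nT\rfloor$. I expect the bookkeeping of constants across the three error sources to be routine; the genuine work is organizing the martingale $L^2$ estimate and verifying the time-regularity of $\phi(\mathbf{x}(\cdot))$ needed for the Euler residual bound.
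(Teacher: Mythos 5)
Your argument is correct, but note that the paper itself does not prove this lemma at all: it is imported verbatim from the cited reference on subspace estimation and used as a black box to establish the main theorem, so there is no in-paper proof to compare against. What you have written is the standard ``ODE method'' derivation, and it is sound: the error recursion $e_{k+1}^{(n)} = e_k^{(n)} + \tfrac{1}{n}(\phi(\mathbf{x}_k^{(n)})-\phi(\mathbf{x}(k/n))) + \rho_k^{(n)} + \delta_k^{(n)} + r_k$ is exactly right, the Euler residual bound $\|r_k\|\le C/n^2$ follows from the Lipschitz property and boundedness of the solution on $[0,T]$, and you correctly identify the one genuinely non-routine step, namely that the martingale structure in (C.1) must be exploited through the $L^2$ orthogonality of increments plus Jensen to get $\mathbb{E}\|\sum_{k'<k}\rho_{k'}^{(n)}\|\le C(T)n^{-\epsilon_1/2}$ rather than the lossy term-by-term bound; this is precisely where the exponent $\tfrac12\epsilon_1$ in the conclusion comes from. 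Two small points worth making explicit if you write this up in full: first, hypothesis (C.4) does not appear in your sketch, and its role here is essentially to guarantee that $g_k=\mathbb{E}\|e_k^{(n)}\|$ is finite so the discrete Gr\"onwall recursion is legitimate (with $\phi$ globally Lipschitz it is not otherwise needed); second, the orthogonality-plus-Jensen bound holds for each fixed $k\le \lfloor nT\rfloor$ separately, which is all the Gr\"onwall step requires, so no maximal inequality is needed even though the final statement is uniform in $k$.
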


The relevant stochastic process is the macroscopic states introduced in Section \ref{sect:training_dynamics}. The macroscopic states are decomposed as 
\begin{equation}
    \textbf{M}_{k+1} - \textbf{M}_{k} = \frac{1}{n}\phi(\textbf{M}_{k}) + (\textbf{M}_{k+1} - \mathbb{E}_{k} \textbf{M}_{k+1}) + \left[ \mathbb{E}_{k} \textbf{M}_{k+1} - \textbf{M}_{k} - \frac{1}{n} \phi(\textbf{M}_{k})\right] .
\end{equation}

Note that our macroscopic state can just be written as an $n^{2}$ dimensional vector, and it is equivalent to using the Frobenius norm in the conditions above. We seek to show that this decomposition satisfies the conditions (C.1) - (C.5).

Immediately, Condition (C.5) is satisfied by the assumption (A.5) for the theorem. Additionally, (C.3) is satisfied by assumption (A.3) and Lemma 4 in the supplementary material of \cite{solvable}.

Next, we slightly modify Lemmas 2 and 7 in the supplementary material of \cite{solvable} for our case.
\begin{lemma}[Lemma 7 of \cite{solvable}]
    Under the assumptions (A.1) - (A.6), given $T > 0$, we have
    \begin{equation}
        \begin{split}
            ||\mathbb{E}_{k} \textbf{v}_{k+1, i} - \textbf{v}_{k, i}|| \leq C n^{-1} (||\textbf{v}_{k, i}|| + ||\textbf{w}_{k, i}||), \\
            ||\mathbb{E}_{k} \textbf{w}_{k+1, i} - \textbf{w}_{k, i}|| \leq C n^{-1} (||\textbf{u}_{i}|| + ||\textbf{v}_{k, i}|| + ||\textbf{w}_{k, i}||).
        \end{split}
    \end{equation}
\end{lemma}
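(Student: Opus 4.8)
The plan is to prove the lemma by a direct evaluation of the conditional one-step drift, adapting the single-feature computation of \citet{solvable} to the multi-feature (matrix) setting. Since each SGD update is an \emph{exact} gradient step with learning rate $\tilde\tau/n$ (resp.\ $\tau/n$), the conditional increment is exactly $\mathbb{E}_k \textbf{v}_{k+1,i} - \textbf{v}_{k,i} = -\tfrac{\tilde\tau}{n}\,\mathbb{E}_k[\nabla_{\textbf{V}_k}\mathcal{L}]_{i,:}^T$, and similarly for $\textbf{w}$; no higher-order Taylor remainder appears. Thus it suffices to show $\|\mathbb{E}_k[\nabla_{\textbf{V}_k}\mathcal{L}]_{i,:}\| \le C(\|\textbf{v}_{k,i}\|+\|\textbf{w}_{k,i}\|)$ and $\|\mathbb{E}_k[\nabla_{\textbf{W}_k}\mathcal{L}]_{i,:}\| \le C(\|\textbf{u}_i\|+\|\textbf{v}_{k,i}\|+\|\textbf{w}_{k,i}\|)$, the prefactor $1/n$ then giving the stated bounds.

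First I would substitute the explicit choices from (A.3). With $F(x)=\hat F(x)=x^2/2$ and $\hat D(\textbf{x})=\|\textbf{x}\|$ one gets $F(\hat D(\textbf{y}^T\textbf{W}))=\tfrac12\textbf{y}^T\textbf{W}\textbf{W}^T\textbf{y}$, so $\nabla_{\textbf{W}}=\textbf{y}\textbf{y}^T\textbf{W}$ and the fake term contributes $-\tilde{\textbf{y}}\tilde{\textbf{y}}^T\textbf{W}$; the generator sees the fake term through $\tilde{\textbf{y}}=\textbf{V}\tilde{\textbf{c}}+\sqrt{\eta_G}\tilde{\textbf{a}}$, giving $\nabla_{\textbf{V}}(-\hat F)=-(\textbf{W}\textbf{W}^T\tilde{\textbf{y}})\tilde{\textbf{c}}^T$. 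Taking row $i$ and then the conditional expectation over the fresh draws $\textbf{c}_k,\textbf{a}_k,\tilde{\textbf{c}}_k,\tilde{\textbf{a}}_k$ (independent of the state by (A.1)--(A.2)), the cross terms between signal and zero-mean Gaussian noise vanish, while the second-moment identities $\mathbb{E}[\textbf{c}\textbf{c}^T]=\Lambda$, $\mathbb{E}[\tilde{\textbf{c}}\tilde{\textbf{c}}^T]=\tilde\Lambda$, $\mathbb{E}[\textbf{a}\textbf{a}^T]=\textbf{I}$ collapse the sums over the ambient index into macroscopic matrices. Concretely I expect $\mathbb{E}_k[\nabla_{\textbf{W}}\mathcal{L}]_{i,:}=\textbf{u}_i^T\Lambda\textbf{Q}_k-\textbf{v}_{k,i}^T\tilde\Lambda\textbf{R}_k+(\eta_T-\eta_G)\textbf{w}_{k,i}^T-(\text{reg})_{i,:}$ and $\mathbb{E}_k[\nabla_{\textbf{V}}\mathcal{L}]_{i,:}=-\textbf{w}_{k,i}^T\textbf{R}_k^T\tilde\Lambda+(\text{reg})_{i,:}$, where $\textbf{Q}_k=\textbf{U}^T\textbf{W}_k$ and $\textbf{R}_k=\textbf{V}_k^T\textbf{W}_k$. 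The absence of $\|\textbf{u}_i\|$ from the generator bound is precisely because $\nabla_{\textbf{V}}\mathcal{L}$ never touches the true sample $\textbf{y}$.

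It then remains to bound these expressions by the stated norms. The spectral norms $\|\Lambda\|,\|\tilde\Lambda\|$ are $O(1)$, and by (A.6) with orthonormality of $\textbf{U}$ we have $\|\textbf{Q}_k\|\le1$, while $\|\textbf{R}_k\|=\|\textbf{V}_k^T\textbf{W}_k\|\le\|\textbf{V}_k\|$ is controlled by the a priori uniform bound on the macroscopic state (condition (C.4)); hence each signal term is a bounded multiple of one of $\|\textbf{u}_i\|,\|\textbf{v}_{k,i}\|,\|\textbf{w}_{k,i}\|$. I would treat the regularizer last and expect it to be the main obstacle: the element-wise $H(\textbf{A})=\log\cosh(\textbf{A}-\textbf{I})$ produces a gradient row $\lambda\,[\textbf{W}_k]_{i,:}\,\mathrm{diag}\big(\tanh(\|\textbf{w}^{(j)}\|^2-1)\big)$ whose naive prefactor $\lambda$ diverges as $\lambda\to\infty$. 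The resolution is that $\tanh$ vanishes exactly on the orthonormal manifold enforced by (A.6), so in the limit the regularizer degenerates into the tangent-space (Lagrange-multiplier) projection term, which is a bounded linear function of the current row; verifying that this limit is well defined and contributes only $O(\|\textbf{v}_{k,i}\|)$ and $O(\|\textbf{w}_{k,i}\|)$ — exactly as in the single-feature argument of \citet{solvable} — is the delicate step. Once it is in place, collecting the terms and multiplying by $\tilde\tau/n$ (resp.\ $\tau/n$) yields the two inequalities and completes the proof.
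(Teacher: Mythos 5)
Your approach---writing the conditional one-step increment as exactly $\mp\frac{\tilde{\tau}}{n}\,\mathbb{E}_{k}[\nabla_{\textbf{V}}\mathcal{L}]$ and $\pm\frac{\tau}{n}\,\mathbb{E}_{k}[\nabla_{\textbf{W}}\mathcal{L}]$, substituting the quadratic choices of (A.3), averaging over the fresh draws $\textbf{c}_{k},\textbf{a}_{k},\tilde{\textbf{c}}_{k},\tilde{\textbf{a}}_{k}$, and bounding row by row---is the right one. The paper itself offers no argument for this lemma (it defers entirely to Lemma 7 of \citet{solvable}), so your sketch is effectively the proof the reader never sees; and your computed drifts, e.g.\ $\mathbb{E}_{k}\textbf{V}_{k+1}-\textbf{V}_{k}=\frac{\tilde{\tau}}{n}(\textbf{W}_{k}\textbf{R}_{k}^{T}\tilde{\Lambda}+\textbf{V}_{k}\textbf{L}_{k})$, agree with the expressions the paper later uses when verifying condition (C.2). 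Your structural observation that $\nabla_{\textbf{V}}\mathcal{L}$ never touches the true sample, which is why $\|\textbf{u}_{i}\|$ is absent from the first inequality, is exactly the right point.

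There is, however, one genuine gap: you invoke condition (C.4) to control $\|\textbf{R}_{k}\|$ (and implicitly $\|\textbf{L}_{k}\|$). In the paper's chain of lemmas, (C.4) is established only in Lemma B.4, which rests on Lemma B.3, which rests on the present lemma---so this appeal is circular. You need the coefficient matrices bounded by a deterministic $C(T)$ \emph{before} any moment bounds on the macroscopic state are available. The standard fixes are either (i) to lean on the uniform boundedness of the derivatives of $F\circ\hat{D}$, $\hat{F}\circ\hat{D}$ and $H$ stipulated in (A.3), which is what \citet{solvable} does (note, though, that the concrete choice $F(x)=x^{2}/2$, $\hat{D}=\|\cdot\|$ makes that stipulation questionable as literally stated), or (ii) to prove the inequality with $C$ replaced by a function of $\|\textbf{S}_{k}\|$ and close the loop with a Gr\"onwall argument as in Lemma 3 of \citet{solvable}. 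A secondary caveat: under (A.6) the discriminator step is not a pure gradient step (orthonormality must be re-imposed each iteration), so your claim that ``no higher-order Taylor remainder appears'' holds only after the $\lambda\to\infty$ / tangent-space projection limit you describe for the regularizer has been carried out for $\textbf{W}$ as well as $\textbf{V}$; you correctly flag this as the delicate step, but the second displayed inequality depends on it being done.
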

The proof of this follows exactly from Lemma 7 of \cite{solvable}.
\begin{lemma} [Lemma 2 of \cite{solvable}]
    Under the assumptions (A.1) - (A.6), given $T > 0$, we have
    \begin{equation}
        \mathbb{E} \left( \sum_{l=1}^{d} [\textbf{V}_{k}]_{i, l}^{4} + [\textbf{W}_{k}]_{i, l}^{4}\right) \leq C(T)n^{-2}.
    \end{equation}
\end{lemma}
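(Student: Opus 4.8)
The plan is to prove the estimate by induction on the time index $k$ together with a discrete Grönwall argument, mirroring the proof of Lemma 2 in \cite{solvable}. Write $G_k^{(i)} := \sum_{l=1}^{d}\big([\textbf{V}_k]_{i,l}^4 + [\textbf{W}_k]_{i,l}^4\big)$ for the quantity to be controlled. The base case $\mathbb{E}[G_0^{(i)}] \leq C/n^2$ is immediate from assumption (A.4), and the entire argument reduces to establishing a one-step recursion of the form
\begin{equation}
    \mathbb{E}[G_{k+1}^{(i)}] \leq \left(1 + \tfrac{C}{n}\right)\mathbb{E}[G_k^{(i)}] + \tfrac{C}{n^3},
\end{equation}
valid uniformly for $k \leq \lfloor nT \rfloor$. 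Once this is in hand, iterating gives $\mathbb{E}[G_k^{(i)}] \leq (1+C/n)^k\,\mathbb{E}[G_0^{(i)}] + \tfrac{C}{n^3}\sum_{j<k}(1+C/n)^j$, and using $(1+C/n)^k \leq e^{CT}$ for $k \leq nT$ together with $\sum_{j<k}(1+C/n)^j \leq nT\,e^{CT}$ collapses everything to $C(T)/n^2$, which is the claim.

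To obtain the recursion I would expand each fourth power across a single SGD step. Writing the row updates as $[\textbf{V}_{k+1}]_{i,l} = [\textbf{V}_k]_{i,l} + \Delta^v_{l}$ and $[\textbf{W}_{k+1}]_{i,l} = [\textbf{W}_k]_{i,l} + \Delta^w_{l}$, where $\Delta^v_l,\Delta^w_l$ are the $\tfrac{\tilde\tau}{n}$- and $\tfrac{\tau}{n}$-scaled gradient entries, the binomial expansion $(a+b)^4 = a^4 + 4a^3 b + 6a^2 b^2 + 4ab^3 + b^4$ reduces the problem to controlling, after taking the conditional expectation $\mathbb{E}_k$, each of the four correction terms. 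The first-order (drift) term $4a^3\,\mathbb{E}_k[\Delta]$ is handled by the preceding Lemma (Lemma 7 of \cite{solvable}), which bounds $\|\mathbb{E}_k[\Delta^v]\|$ and $\|\mathbb{E}_k[\Delta^w]\|$ by $Cn^{-1}$ times the row norms $\|\textbf{u}_i\|,\|\textbf{v}_{k,i}\|,\|\textbf{w}_{k,i}\|$; a weighted AM--GM inequality of the form $|a|^3 b \leq \tfrac34 a^4 + \tfrac14 b^4$, together with $\|\textbf{v}_{k,i}\|^4 \leq d\sum_l [\textbf{V}_k]_{i,l}^4$, converts these into $\tfrac{C}{n}G_k^{(i)}$ plus a $\tfrac{C}{n}\|\textbf{u}_i\|^4$ term, the latter being $O(n^{-3})$ in expectation by (A.4). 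The second-order term $6a^2\,\mathbb{E}_k[\Delta^2]$ uses the bounded moment assumptions (A.1)--(A.2): since the discriminator increment involves the raw sample $\textbf{y}_k$ with order-one entries, $\mathbb{E}_k[\Delta^2] \leq C/n^2$, and the elementary bound $a^2 \leq \tfrac12(n\,a^4 + 1/n)$ splits this into a $\tfrac{C}{n}a^4$ piece, absorbed into the multiplicative factor, and a $\tfrac{C}{n^3}$ piece. The remaining $4ab^3$ and $b^4$ terms, with $\mathbb{E}_k[\Delta^3]=O(n^{-3})$ and $\mathbb{E}_k[\Delta^4]=O(n^{-4})$, are of strictly smaller order and disposed of the same way. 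Summing over $l=1,\dots,d$ and taking full expectation yields the recursion above.

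The main obstacle is the careful bookkeeping in the inductive step: one must verify that every correction term splits cleanly into a part proportional to $G_k^{(i)}$ with coefficient $O(1/n)$ and a residual of size $O(1/n^3)$, never $O(1/n^2)$, since a per-step residual of $O(1/n^2)$ would accumulate to $O(1/n)$ over the $nT$ steps and destroy the bound. This hinges on two facts: that the conditional mean of the increment is bounded by $O(1/n)$ times the row norms, rather than merely $O(1/\sqrt{n})$, which is precisely the content of Lemma 7 and crucially uses the structure of the loss through (A.3); and that the order-one size of the raw data entries $[\textbf{y}_k]_i,[\tilde{\textbf{y}}_k]_i$ is compensated by the extra factor of $1/n$ in the step size, so that $\mathbb{E}_k[\Delta^2]=O(1/n^2)$. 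The weighted AM--GM step is what lets the unavoidable cross terms between the quadratic prefactors and the increment moments be charged either to the Grönwall factor or to the negligible residual.
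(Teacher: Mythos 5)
Your proposal is correct and follows essentially the same route as the paper: a binomial expansion of the fourth power across one SGD step, moment bounds on the increments of order $n^{-\gamma}$ times the row norms, absorption of cross terms into either the $(1+C/n)$ Grönwall factor or an $O(n^{-3})$ residual, and a discrete Grönwall iteration closed by assumption (A.4). Your explicit weighted AM--GM step $a^{2}\leq\tfrac{1}{2}(na^{4}+1/n)$, which keeps the per-step residual at $O(n^{-3})$ rather than $O(n^{-2})$, is exactly the bookkeeping the paper's combined inequality implicitly relies on.
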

\begin{proof}
    We show that this holds for a fixed $i$. \\
    First, we know that 
    \begin{equation}
        \mathbb{E}_{k} ||\textbf{w}_{k+1, i} - \textbf{w}_{k, i}||^{\gamma} \leq \frac{C}{n^{\gamma}} (1 + ||\textbf{u}_{i}||^{\gamma} + ||\textbf{v}_{k, i}||^{\gamma} + ||\textbf{w}_{k, i}||^{\gamma}).
    \end{equation}
    whenever $\gamma = 2, 3, 4$, due to boundedness of $h, f, \tilde{f}$. Additionally, we can write
    \begin{equation}
        \begin{split}
        \mathbb{E} [\textbf{w}_{k+1}]_{i, l}^{4} - \mathbb{E} [\textbf{w}_{k}]_{i, l}^{4} &= 4 \mathbb{E} \left[ [\textbf{w}_{k}]_{i, l}^{3}\mathbb{E} ([\textbf{w}_{k+1}]_{i, l} - [\textbf{w}_{k}]_{i, l})\right], \\ &+ 6 \mathbb{E}\left[ [\textbf{w}_{k}]_{i, l}^{2} \mathbb{E} ([\textbf{w}_{k+1}]_{i, l} - [\textbf{w}_{k}]_{i, l})^{2} \right], \\ &+ 4 \mathbb{E} \left[ [\textbf{w}_{k}]_{i, l}\mathbb{E} ([\textbf{w}_{k+1}]_{i, l} - [\textbf{w}_{k}]_{i, l})^{3}\right], \\ &+ \mathbb{E}\mathbb{E}_{k} ([\textbf{w}_{k+1}]_{i, l} - [\textbf{w}_{k}]_{i, l})^{4}.
    \end{split}
    \end{equation}
    Combining both of these, we get that
    \begin{equation}
    \begin{split}
        \mathbb{E} [\textbf{w}_{k+1}]_{i, l}^{4} - \mathbb{E} [\textbf{w}_{k}]_{i, l}^{4} &\leq \frac{C}{n} \left( n^{-2} + \mathbb{E} ||\textbf{u}_{i}||^{4} + \mathbb{E} ||\textbf{v}_{k, i}||^{4} + \mathbb{E} ||\textbf{w}_{k, i}||^{4}\right), \\
        &\leq \frac{C}{n} \mathbb{E} (n^{-2} + \sum_{l=1}^{d} [\textbf{V}_{k}]_{i, l}^{4} + [\textbf{W}_{k}]_{i, l}^{4}).
    \end{split}
    \end{equation}
    which follows from assumption (A.4). Similarly, we get
    \begin{equation}
        \sum_{l=1}^{d} \mathbb{E} ([\textbf{V}_{k+1}]_{i, l}^{4} - [\textbf{V}_{k}]_{i, l}^{4}) \leq \frac{C}{n} \mathbb{E}(n^{-2} + \sum_{l=1}^{d} [\textbf{V}_{k}]_{i, l}^{4} + [\textbf{W}_{k}]_{i, l}^{4}).
    \end{equation}
    Combining these for both terms and iteratively applying it, we get
    \begin{equation}
        \mathbb{E} (\sum_{l=1}^{d} [\textbf{V}_{k}]_{i, l}^{4} + [\textbf{W}_{k}]_{i, l}^{4}) \leq (n^{-2} + \sum_{l=1}^{d} [\textbf{V}_{0}]_{i, l}^{4} + [\textbf{W}_{0}]_{i, l}^{4}) e^{\frac{k}{n}C}.
    \end{equation}
    Then, due to assumption (A.4), we get the required result.
\end{proof}

Once we have Lemmas B.2 and B.3, we can show that condition (C.4) is satisfied.
\begin{lemma}
    Condition (C.4) is satisfied for our macroscopic state stochastic process.
\end{lemma}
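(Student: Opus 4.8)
The plan is to reduce the desired bound $\mathbb{E}\|\textbf{M}_k\|_F^2 \le C$ entirely to the fourth-moment estimate of Lemma B.3, exploiting the orthonormality of $\textbf{U}$ and (via assumption (A.6)) of $\textbf{W}_k$. First I would isolate the only unbounded part of the macroscopic state. Writing $\textbf{M}_k$ in its block form, the diagonal blocks $\textbf{I}$ and $\textbf{Z}_k = \textbf{I}_d$ contribute $2d$ to $\|\textbf{M}_k\|_F^2$, while $\textbf{Q}_k = \textbf{U}^T\textbf{W}_k$ satisfies $\|\textbf{Q}_k\|_F \le \|\textbf{U}\|_{\mathrm{op}}\|\textbf{W}_k\|_F = \sqrt{d}$ since both factors are orthonormal. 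The remaining blocks $\textbf{P}_k = \textbf{U}^T\textbf{V}_k$, $\textbf{R}_k = \textbf{V}_k^T\textbf{W}_k$ and $\textbf{S}_k = \textbf{V}_k^T\textbf{V}_k$ all involve the generator $\textbf{V}_k$, which is not constrained to be orthonormal; bounding their Frobenius norms by the operator norms of the orthonormal factors (e.g. $\|\textbf{P}_k\|_F \le \|\textbf{V}_k\|_F$ and $\|\textbf{S}_k\|_F \le \|\textbf{V}_k\|_F^2$) yields $\|\textbf{M}_k\|_F^2 \le C\,(1 + \|\textbf{V}_k\|_F^4)$. Thus everything reduces to showing $\mathbb{E}\|\textbf{V}_k\|_F^4 \le C(T)$.

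For that bound I would pass from the Frobenius norm to the per-row fourth moments controlled by Lemma B.3. Setting $v_i = \sum_{l=1}^d [\textbf{V}_k]_{i,l}^2$, two applications of Cauchy--Schwarz give $\|\textbf{V}_k\|_F^4 = \big(\sum_{i=1}^n v_i\big)^2 \le n\sum_{i=1}^n v_i^2 \le nd\sum_{i=1}^n \sum_{l=1}^d [\textbf{V}_k]_{i,l}^4$. Taking expectations and invoking Lemma B.3, which bounds $\mathbb{E}\sum_{l}[\textbf{V}_k]_{i,l}^4$ by $C(T)\,n^{-2}$ uniformly in $k \le \lfloor nT\rfloor$, the sum over the $n$ rows produces a factor $n \cdot C(T)\,n^{-2} = C(T)/n$, and the prefactor $nd$ cancels the remaining $n^{-1}$ to leave $\mathbb{E}\|\textbf{V}_k\|_F^4 \le d\,C(T)$. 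Combining this with the block estimate of the previous paragraph gives $\mathbb{E}\|\textbf{M}_k\|_F^2 \le C(T)$, which is exactly condition (C.4).

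The main obstacle is the potentially unbounded growth of the generator matrix $\textbf{V}_k$: unlike $\textbf{U}$ and $\textbf{W}_k$, it carries no orthonormality constraint, so a priori its Frobenius norm could grow with $k$, and it is precisely this that Lemma B.3 is designed to preclude. Consequently the real content of the argument is the careful power-counting in $n$: the quartic quantity $\|\textbf{V}_k\|_F^4$ accrues a factor of $n^2$ from the two Cauchy--Schwarz steps together with the row sum, and this must be matched exactly by the $n^{-2}$ decay of the fourth moments; a looser moment control (say only a second-moment bound of order $n^{-1}$) would fail to close the estimate. Since Lemma B.3 holds uniformly over $0 \le k \le \lfloor nT\rfloor$ with a constant that depends on $T$ but not on $n$, the resulting bound on $\mathbb{E}\|\textbf{M}_k\|_F^2$ is uniform in $k$, as required by (C.4).
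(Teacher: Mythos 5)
Your proof is correct and rests on the same mechanism as the paper's: reduce the second moments of the macroscopic blocks to the per-row fourth-moment bound of Lemma B.3 via Cauchy--Schwarz, with the $n^{-2}$ decay exactly absorbing the two factors of $n$ from the row sum. The only differences are organizational --- you treat $\textbf{P}_k$ and $\textbf{S}_k$ explicitly through a unified bound on $\mathbb{E}\|\textbf{V}_k\|_F^4$ where the paper defers them to Lemma 3 of the cited reference, and you handle $\textbf{Q}_k$ deterministically from the orthonormality in (A.6) rather than probabilistically --- so no substantive gap.
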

\begin{proof}
    We show that the expected norm squared of each macroscopic state is less than some $C(T)$. The cases of $\textbf{P}_{k}$ and $\textbf{S}_{k}$ are proven in Lemma 3 of \cite{solvable}, and require no changes. Additionally, by our assumption (A.6) that the matrix $\textbf{W}_{k}$ is orthonormalized, we know that $\textbf{Z}_{k} = \mathbb{I}_{d}$, and so the requirement is trivially satisfied for $\textbf{Z}_{k}$. Thus, it remains to show this for $\textbf{Q}_{k}$ and $\textbf{R}_{k}$. We show this for $\textbf{Q}_{k}$, and $\textbf{R}_{k}$ follows similarly.
    \begin{equation}
        \begin{split}
            \mathbb{E} [\textbf{Q}_{k}]_{l, l'}^{2} &= \mathbb{E} (\sum_{i=1}^{n} [\textbf{U}]_{i, l}[\textbf{W}_{k}]_{i, l'})^{2}, \\
            &\leq \mathbb{E}(\sum_{i=1}^{n}[\textbf{U}]_{i, l}^{2})\mathbb{E}(\sum_{i=1}^{n}[\textbf{W}]_{i, l'}^{2}), \\
            &\leq \sqrt{\mathbb{E}(\sum_{i=1}^{n}[\textbf{U}]_{i, l}^{2})^{2} \mathbb{E}(\sum_{i=1}^{n}[\textbf{W}]_{i, l'}^{2})^{2}}, \\
            &\leq C(T),
        \end{split}
    \end{equation}
    as required.
\end{proof}

\begin{lemma}
    Condition (C.2) above is satisfied, meaning that for all $k=0, 1, \cdots, \lfloor nT \rfloor$, and for a given $T > 0$, we have
    \begin{equation}
        \mathbb{E} ||\mathbb{E}_{k} \textbf{M}_{k+1} - \textbf{M}_{k} - \frac{1}{n} \phi(\textbf{M}_{k})|| \leq C(T)n^{-3/2}.
    \end{equation}
\end{lemma}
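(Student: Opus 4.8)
The plan is to read off $\phi(\textbf{M}_k)$ as $n$ times the right-hand side of the limiting ODE in the theorem, and then verify that the one-step conditional drift of the macroscopic state agrees with $\frac{1}{n}\phi(\textbf{M}_k)$ up to an error of order $n^{-3/2}$ in expectation. First I would expand $\textbf{M}_{k+1}-\textbf{M}_k$ block by block. Since $\textbf{U}$ is fixed and $\textbf{Z}_k=\textbf{I}_d$ by (A.6), the only moving parts are the increments $\Delta\textbf{V}:=\textbf{V}_{k+1}-\textbf{V}_k=-\frac{\tilde\tau}{n}\nabla_{\textbf{V}}\mathcal{L}$ and $\Delta\textbf{W}:=\textbf{W}_{k+1}-\textbf{W}_k=\frac{\tau}{n}\nabla_{\textbf{W}}\mathcal{L}$, each of size $O(1/n)$. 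The blocks $\textbf{P}=\textbf{U}^T\textbf{V}$ and $\textbf{Q}=\textbf{U}^T\textbf{W}$ are linear in these increments, whereas $\textbf{R}=\textbf{V}^T\textbf{W}$ and $\textbf{S}=\textbf{V}^T\textbf{V}$ also pick up quadratic cross terms such as $\Delta\textbf{V}^T\Delta\textbf{W}$ and $\Delta\textbf{V}^T\Delta\textbf{V}$.

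Next I would take $\mathbb{E}_k$ and split the result into a linear contribution and a quadratic (Itô) contribution. For the linear part I would substitute the spiked-covariance forms $\textbf{y}_k=\textbf{U}\textbf{c}_k+\sqrt{\eta_T}\textbf{a}_k$ and $\tilde{\textbf{y}}_k=\textbf{V}_k\tilde{\textbf{c}}_k+\sqrt{\eta_G}\tilde{\textbf{a}}_k$ into the explicit gradients (using that $F(\hat D(\textbf{x}))=\tfrac12\|\textbf{x}\|^2$ has $\textbf{W}$-gradient $\textbf{y}\textbf{y}^T\textbf{W}$), then average over $\textbf{c}_k,\textbf{a}_k,\tilde{\textbf{c}}_k,\tilde{\textbf{a}}_k$ using (A.1)--(A.2). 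The Gaussian and bounded-moment structure collapses these averages into closed forms in $\Lambda,\tilde\Lambda,\eta_T,\eta_G$ and the current weights, reproducing exactly the drift terms $\Lambda\textbf{Q}_t$, $\textbf{P}_t\tilde\Lambda\textbf{R}_t$, and so on. For the quadratic part I would compute $\mathbb{E}_k$ of the products of increments: each entry is $O(1/n^2)$, but the inner products sum over the $n$ ambient coordinates, so these contribute at order $1/n$ and, at their leading deterministic value, supply the noise corrections collected in $\textbf{H}_t$ and $\textbf{L}_t$ (the $\eta_G^2+\eta_T^2$ and $\mathrm{diag}(\cdot)$ terms). With $\phi$ defined as the sum of all these leading pieces, the remainder $\delta_k=\mathbb{E}_k\textbf{M}_{k+1}-\textbf{M}_k-\frac1n\phi(\textbf{M}_k)$ consists of (i) cubic-and-higher terms in the increments, (ii) the fluctuation of the quadratic terms about their macroscopic value, and (iii) the Taylor remainder of the regularizer $H=\log\cosh(\cdot-\textbf{I})$.

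Finally I would bound $\mathbb{E}\|\delta_k\|$ termwise. Type (i) is controlled by the moment bound $\mathbb{E}_k\|\textbf{w}_{k+1,i}-\textbf{w}_{k,i}\|^\gamma\le \frac{C}{n^\gamma}(1+\|\textbf{u}_i\|^\gamma+\|\textbf{v}_{k,i}\|^\gamma+\|\textbf{w}_{k,i}\|^\gamma)$ from the proof of Lemma B.3: three increments give $O(n^{-3})$ per entry, hence $O(n^{-2})$ after summing, well below $n^{-3/2}$. Type (iii) is likewise $O(n^{-2})$, since the bounded derivatives in (A.3) make $H$ locally quadratic with a remainder quadratic in increments of size $O(1/n)$, together with the orthonormality forced as $\lambda\to\infty$. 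The decisive term is type (ii): the quadratic increment expectations are genuinely $O(1/n)$, so I must show their deviation from the closed-form value at $\textbf{M}_k$ is only $O(n^{-3/2})$. This is precisely the self-averaging step, where the $n^{-1/2}$ concentration rate of the high-dimensional sums enters; I would obtain it by evaluating the fourth-order moments of the data projections through the Gaussianity of $\textbf{a}_k,\tilde{\textbf{a}}_k$ and the independence in (A.1)--(A.2), and then bounding the residual fluctuation with the uniform fourth-moment estimate $\mathbb{E}(\sum_{l}[\textbf{V}_k]_{i,l}^4+[\textbf{W}_k]_{i,l}^4)\le C(T)n^{-2}$ of Lemma B.3 and Cauchy--Schwarz. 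I expect type (ii) to be the main obstacle: establishing that the conditional expectation of the quadratic blocks closes onto a function of $\textbf{M}_k$ alone at the sharp $n^{-3/2}$ rate is the only place where the matrix-valued multi-feature generalization demands care beyond the scalar argument of \citet{solvable}. Summing the three bounds yields $\mathbb{E}\|\delta_k\|\le C(T)n^{-3/2}$.
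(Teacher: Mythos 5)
Your overall skeleton --- expand $\textbf{M}_{k+1}-\textbf{M}_k$ block by block, take $\mathbb{E}_k$, separate the contributions that are linear and quadratic in the increments, and invoke the fourth-moment estimates of Lemma B.3 --- matches the paper's proof, and your linear-part computation (Gaussian averaging of the gradients reproducing $\Lambda\textbf{Q}_k$, $\textbf{P}_k\tilde\Lambda\textbf{R}_k$, etc.) is exactly what the paper does for $\textbf{P}_k$ and $\textbf{Q}_k$, where the conditional drift equals $\frac{1}{n}\phi(\textbf{M}_k)$ with zero remainder. However, one of your scaling claims is off: you assert that the quadratic increment products are $O(1/n)$ because they ``sum over the $n$ ambient coordinates,'' and that at leading order they supply the $\textbf{L}_t$ and $\textbf{H}_t$ corrections. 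This is only plausible on the discriminator side, where $\|\textbf{a}_k\|^2\sim n$ enters. For the generator blocks it is wrong: $\Delta\textbf{V}^T\Delta\textbf{V}$ involves only the $d$-dimensional projections $\textbf{W}^T\tilde{\textbf{y}}_k$ and the vector $\tilde{\textbf{c}}_k$, so it is genuinely $O(n^{-2})$. The paper accordingly relegates it to the remainder $\delta_k$ --- that bound is the entire content of the $\textbf{S}_k$ case --- and $\textbf{L}_t$ comes from the first-order gradient of the regularizer, not from quadratic products. Folding a spurious $O(1/n)$ quadratic contribution into $\phi$ for the $\textbf{S}$ block would change the ODE you are trying to match.

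More importantly, the step you single out as decisive --- a self-averaging estimate showing the conditional expectations of the quadratic blocks concentrate around a function of $\textbf{M}_k$ at the sharp $n^{-3/2}$ rate --- is not needed, and it is not in the paper. Condition (C.2) concerns the conditional expectation $\mathbb{E}_k$: since $\textbf{U},\textbf{V}_k,\textbf{W}_k$ are measurable with respect to the past and $\textbf{c}_k,\textbf{a}_k,\tilde{\textbf{c}}_k,\tilde{\textbf{a}}_k$ are independent of it by (A.1)--(A.2), every Gaussian moment appearing in $\mathbb{E}_k\textbf{M}_{k+1}$ reduces \emph{exactly}, not approximately, to polynomials in the entries of $\textbf{M}_k=\textbf{X}_k^T\textbf{X}_k$ (e.g.\ $\mathbb{E}_k[\textbf{W}_k^T\textbf{a}_k\textbf{a}_k^T\textbf{U}]=\textbf{Q}_k^T$). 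There is no residual fluctuation to control at this stage; the fluctuation about the conditional mean is the martingale term $\rho_k$, which is the object of condition (C.1), not (C.2). The actual work in (C.2) is the exact closure of the linear terms plus the $O(n^{-2})$ bound on the quadratic remainder via $\mathbb{E}\|\textbf{W}_k\textbf{R}_k^T\tilde\Lambda+\textbf{V}_k\textbf{L}_k\|^2\le C(T)$ --- precisely the parts you dismiss as routine --- while the step on which you concentrate your effort is vacuous.
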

\begin{proof}
    To prove this, we can split it into five parts, one for each of the macroscopic states. For the macroscopic state $\textbf{Z}_{k}$, this just requires showing that
    \begin{equation}
        \mathbb{E}||\mathbb{E}_{k} \textbf{Z}_{k+1} - \textbf{Z}_{k}|| \leq C(T)n^{-3/2}.
    \end{equation}
    But the left side is just zero, since $\textbf{Z}_{k} = \mathbb{I}_{d}$ for all $k$. Thus, this is trivially satisfied. \\
    For the macroscopic state $\textbf{P}_{k}$, we want to show that
    \begin{equation}
        \mathbb{E}||\mathbb{E}_{k} \textbf{P}_{k+1} - \textbf{P}_{k} - \frac{\tilde{\tau}}{n} (\textbf{Q}_{k}\textbf{R}_{^{T}}\tilde{\Lambda} + \textbf{P}_{k}\textbf{L}_{k})|| \leq C(T) n^{-3/2}.
    \end{equation}
    However, from the gradient of our update equation for $V$, averaging over $\tilde{c}_{k}, \tilde{a}_{k}$ we see that 
    \begin{equation}
        \textbf{E}_{k} \textbf{V}_{k+1} - \textbf{V}_{k} = \frac{\tilde{\tau}}{n} \left[ \textbf{W}_{k}\textbf{R}_{k}^{T}\tilde{\Lambda} + \textbf{V}_{k}\textbf{L}_{k}\right].
    \end{equation}
    Multiplying both sides by $\textbf{U}^{T}$ on the left, we get 
    \begin{equation}
        \mathbb{E}_{k} \textbf{P}_{k+1} - \textbf{P}_{k} = \frac{\tilde{\tau}}{n} \left[ \textbf{Q}_{k}\textbf{R}_{k}^{T}\tilde{\Lambda} + \textbf{P}_{k}\textbf{L}_{k}\right].
    \end{equation}
    But then, the left side of the equation we wanted to show is just zero, and so the inequality is satisfied. \\
    Applying a similar process to the update equation for $W_{k}$, we want to show that
    \begin{equation}
        \mathbb{E}||\mathbb{E}_{k} \textbf{Q}_{k+1} - \textbf{Q}_{k} - \frac{\tau}{n} (\Lambda \textbf{Q}_{k} - \textbf{P}_{k}\tilde{\Lambda}\textbf{R}_{k} + \textbf{H}_{k}\textbf{Q}_{k})|| \leq C(T) n^{-3/2},
    \end{equation}
    and by averaging over $\tilde{\textbf{c}}_{k}, \tilde{\textbf{a}}_{k}, \textbf{c}_{k}, \textbf{a}_{k}$ and multiplying by $\textbf{U}^{T}$ on the left, we get
    \begin{equation}
        \mathbb{E}_{k} \textbf{Q}_{k+1} - \textbf{Q}_{k} = \frac{\tau}{n} \left[ \Lambda \textbf{Q}_{k} - \textbf{P}_{k}\tilde{\Lambda}\textbf{R}_{k} + \textbf{H}_{k}\textbf{Q}_{k}\right].
    \end{equation}
    Again, this results in the left side of the expression we want to show just being zero. \\
    Finally, we show the result for $\textbf{S}_{k}$. The case for $\textbf{R}_{k}$ follows similarly to the previous results. \\
    Using the property that 
    \begin{equation}
        \textbf{S}_{k+1} - \textbf{S}_{k} = \textbf{V}_{k}^{T}(\textbf{V}_{k+1} - \textbf{V}_{k}) + (\textbf{V}_{k+1} - \textbf{V}_{k})^{T}\textbf{V}_{k} + (\textbf{V}_{k+1} - \textbf{V}_{k})^{T}(\textbf{V}_{k+1} - \textbf{V}_{k}),
    \end{equation}
    and averaging over $\tilde{\textbf{c}}_{k}, \tilde{\textbf{a}}_{k}$, we get
    \begin{equation}
        \mathbb{E}_{k} \textbf{S}_{k+1} - \textbf{S}_{k} = \frac{\tilde{\tau}}{n} \left[ \textbf{R}_{k}\textbf{R}_{k}^{T}\tilde{\Lambda} + \textbf{S}_{k}\textbf{L}_{k} + \tilde{\Lambda}\textbf{R}_{k}\textbf{R}_{k}^{T} + \textbf{L}_{k}\textbf{S}_{k}\right] + \frac{\tilde{\tau}^{2}}{n^{2}} \left[ \textbf{W}_{k}\textbf{R}_{k}^{T}\tilde{\Lambda} + \textbf{V}_{k}\textbf{L}_{k}\right]^{T}\left[ \textbf{W}_{k}\textbf{R}_{k}^{T}\tilde{\Lambda} + \textbf{V}_{k}\textbf{L}_{k}\right].
    \end{equation}
    The second term in the sum above has expected norm
    \begin{equation}
    \begin{split}
        \mathbb{E} || \frac{\tilde{\tau}^{2}}{n^{2}} \left[ \textbf{W}_{k}\textbf{R}_{k}^{T}\tilde{\Lambda} + \textbf{V}_{k}\textbf{L}_{k}\right]^{T}\left[ \textbf{W}_{k}\textbf{R}_{k}^{T}\tilde{\Lambda} + \textbf{V}_{k}\textbf{L}_{k}\right] || &\leq \mathbb{E} ||\textbf{W}_{k}\textbf{R}_{k}^{T}\tilde{\Lambda} + \textbf{V}_{k}\textbf{L}_{k}||^{2}, \\
        &\leq 2 ||\textbf{Z}_{k}|| ||\textbf{R}_{k}^{T}\tilde{\Lambda}||^{2} + 2 ||\textbf{S}_{k}|| ||\textbf{L}_{k}||^{2}, \\
        &\leq C \mathbb{E} \left[||\textbf{Z}_{k}|| + ||\textbf{S}_{k}|| \right], \\
        &\leq C(T).
    \end{split}
    \end{equation}
    This concludes the proof.
\end{proof}

For condition (C.1), the requirement of being a martingale is automatically satisfied by construction. To show that the remainder of condition (C.1) is satisfied, it suffices to prove that
\begin{equation}
    \mathbb{E} ||\textbf{M}_{k+1} - \textbf{M}_{k}||^{2} \leq C(T) n^{-2}.
\end{equation}

\begin{lemma}
    \begin{equation}
        \mathbb{E} ||\textbf{M}_{k+1} - \textbf{M}_{k}||^{2} \leq C(T) n^{-2}.
    \end{equation}
\end{lemma}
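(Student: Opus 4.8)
The plan is to expand the one-step increment through the quadratic map $\textbf{M}_k = \textbf{X}_k^{T}\textbf{X}_k$ and bound each block of the resulting symmetric matrix separately, exploiting that the dangerous ambient-noise fluctuations are annihilated by the low-rank projections defining the macroscopic state. Write $\Delta\textbf{V}_k = \textbf{V}_{k+1}-\textbf{V}_k$ and $\Delta\textbf{W}_k = \textbf{W}_{k+1}-\textbf{W}_k$. Since $\textbf{U}$ is fixed the top-left identity block never moves, and since $\textbf{W}_k$ is kept orthonormal by (A.6) we have $\textbf{Z}_k \equiv \textbf{I}_d$, so the $\textbf{Z}$ block increment vanishes identically. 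By orthogonality of the block decomposition of the Frobenius norm it then suffices, as $d$ is fixed and there are finitely many blocks, to show
\begin{equation}
\mathbb{E}\|\textbf{P}_{k+1}-\textbf{P}_k\|^2,\ \ \mathbb{E}\|\textbf{Q}_{k+1}-\textbf{Q}_k\|^2,\ \ \mathbb{E}\|\textbf{R}_{k+1}-\textbf{R}_k\|^2,\ \ \mathbb{E}\|\textbf{S}_{k+1}-\textbf{S}_k\|^2\ \le\ C(T)n^{-2}.
\end{equation}

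The generator blocks are the easy ones. From the loss in (A.3) the generator gradient is $\nabla_{\textbf{V}}\mathcal{L} = \textbf{W}_k\textbf{W}_k^{T}\tilde{\textbf{y}}_k\tilde{\textbf{c}}_k^{T}$ (up to the vanishing regularization), so the projection $\textbf{W}_k\textbf{W}_k^{T}$ is already built into the increment; because $\tilde{\textbf{y}}_k$ is projected onto the $d$-dimensional column space of the orthonormal $\textbf{W}_k$ we have $\|\textbf{W}_k^{T}\tilde{\textbf{y}}_k\|=O(1)$ in every moment, whence $\mathbb{E}\|\Delta\textbf{V}_k\|^4 \le C(T)n^{-4}$ and in particular $\mathbb{E}\|\Delta\textbf{V}_k\|^2 \le C(T)n^{-2}$. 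Consequently $\textbf{P}_{k+1}-\textbf{P}_k = \textbf{U}^{T}\Delta\textbf{V}_k$ satisfies $\mathbb{E}\|\textbf{P}_{k+1}-\textbf{P}_k\|^2 \le \mathbb{E}\|\Delta\textbf{V}_k\|^2 \le C(T)n^{-2}$ using $\|\textbf{U}\|_{\mathrm{op}}=1$; the $\textbf{S}$ block and the $\Delta\textbf{V}_k$-part of the $\textbf{R}$ block are handled the same way after writing $\textbf{S}_{k+1}-\textbf{S}_k = \textbf{V}_k^{T}\Delta\textbf{V}_k + (\Delta\textbf{V}_k)^{T}\textbf{V}_k + (\Delta\textbf{V}_k)^{T}\Delta\textbf{V}_k$ and controlling $\textbf{V}_k$ through $\mathbb{E}\|\textbf{V}_k\|^4 = \mathbb{E}(\mathrm{tr}\,\textbf{S}_k)^2 \le C(T)$, which follows from the bound $\mathbb{E}\|\textbf{M}_k\|^2 \le C(T)$ established in verifying condition (C.4), paired by Cauchy--Schwarz against the fourth-moment increment bound.

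The discriminator blocks $\textbf{Q}$ and $\textbf{R}$ are where the real work lies, because $\Delta\textbf{W}_k$ is genuinely only of size $n^{-1/2}$ in Frobenius norm: the single-sample gradient $\textbf{y}_k\textbf{y}_k^{T}\textbf{W}_k - \tilde{\textbf{y}}_k\tilde{\textbf{y}}_k^{T}\textbf{W}_k$ carries the full ambient vectors $\textbf{y}_k,\tilde{\textbf{y}}_k$ of norm $O(\sqrt n)$, so the naive estimate $\|\textbf{Q}_{k+1}-\textbf{Q}_k\| \le \|\Delta\textbf{W}_k\|$ yields only the useless rate $O(n^{-1})$. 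The remedy is to keep the left-projection: writing $\textbf{Q}_{k+1}-\textbf{Q}_k = \tfrac{\tau}{n}\,\textbf{U}^{T}\!\big(\textbf{y}_k\textbf{y}_k^{T}\textbf{W}_k - \tilde{\textbf{y}}_k\tilde{\textbf{y}}_k^{T}\textbf{W}_k + \textbf{W}_k\textbf{C}_k\big)$, with $\textbf{C}_k$ the $d\times d$ orthonormalization correction, every factor becomes a projection of $\textbf{y}_k$ or $\tilde{\textbf{y}}_k$ onto a fixed $d$-dimensional subspace, e.g. $\textbf{U}^{T}\textbf{y}_k = \textbf{c}_k + \sqrt{\eta_T}\,\textbf{U}^{T}\textbf{a}_k$ and $\textbf{W}_k^{T}\textbf{y}_k$. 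The Gaussian-projection fact that, conditionally on $\mathcal{F}_k$, $\textbf{U}^{T}\textbf{a}_k \sim \mathcal{N}(0,\textbf{I}_d)$ and $\textbf{W}_k^{T}\textbf{a}_k \sim \mathcal{N}(0,\textbf{I}_d)$ makes each such factor $O(1)$ in all moments, so the whole expression is $\tfrac{\tau}{n}\times O(1)$ and $\mathbb{E}\|\textbf{Q}_{k+1}-\textbf{Q}_k\|^2 \le C(T)n^{-2}$. The $\textbf{R}$ block combines this $\textbf{V}_k^{T}\Delta\textbf{W}_k$ piece, projected by $\textbf{V}_k^{T}$ with $\textbf{V}_k^{T}\textbf{y}_k = \textbf{P}_k^{T}\textbf{c}_k + \sqrt{\eta_T}\,\textbf{V}_k^{T}\textbf{a}_k$ controlled via $\mathbb{E}\|\textbf{S}_k\|^2,\ \mathbb{E}\|\textbf{P}_k\|^2 \le C(T)$, with the already-small $(\Delta\textbf{V}_k)^{T}\textbf{W}_k$ piece and the quadratic remainder $(\Delta\textbf{V}_k)^{T}\Delta\textbf{W}_k$, the latter $O(n^{-2})$ by Cauchy--Schwarz on the two increment moments.

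I expect the main obstacle to be exactly this $\textbf{Q}$/$\textbf{R}$ analysis: one must resist bounding the macroscopic increment by the microscopic increment $\|\Delta\textbf{W}_k\|$ (which is too large) and instead show explicitly that the order-$\sqrt n$ isotropic fluctuation of $\Delta\textbf{W}_k$, living in the high-dimensional space normal to the columns of $\textbf{W}_k$, is suppressed to $O(1)$ after left-multiplication by the $d$-row matrices $\textbf{U}^{T}$ and $\textbf{V}_k^{T}$. Making this rigorous requires the concentration of a standard Gaussian projected onto a fixed $d$-dimensional subspace, together with the uniform moment bounds on $\textbf{P}_k,\textbf{Q}_k,\textbf{R}_k,\textbf{S}_k$ from the preceding lemmas (the fourth-moment lemma and condition (C.4)) to control the random matrices appearing once the projection is carried out. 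Everything else --- the generator blocks, the correction term that keeps $\textbf{Z}_k=\textbf{I}_d$, and the quadratic-in-increment remainders --- reduces to routine use of the triangle inequality, orthonormality, and Cauchy--Schwarz.
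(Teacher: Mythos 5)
Your proposal is correct and follows essentially the same route as the paper's proof: decompose $\textbf{M}_{k+1}-\textbf{M}_{k}$ into its five blocks, dismiss $\textbf{Z}_{k}$ trivially via (A.6), and bound each remaining block's increment by rewriting it in terms of the $d$-dimensional projections of the data vectors ($\textbf{c}_{k}$, $\textbf{U}^{T}\textbf{a}_{k}$, $\textbf{W}_{k}^{T}\textbf{y}_{k}$, etc.) multiplied by macroscopic quantities whose second moments are controlled by the (C.4) lemma, then apply Cauchy--Schwarz. Your explicit discussion of why the naive bound via $\|\Delta\textbf{W}_{k}\|$ fails and why the left-projection rescues the rate is exactly the mechanism implicit in the paper's displayed estimate for $\textbf{Q}_{k}$.
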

\begin{proof}
    We can break this up into each of the 5 macroscopic states separately. As before, doing this for $\textbf{Z}_{k}$ is trivial. We show this for $\textbf{P}_{k}$ and $\textbf{Q}_{k}$, and the rest follow similarly. For $\textbf{P}_{k}$, we get
    \begin{equation}
    \begin{split}
        \mathbb{E}||\textbf{P}_{k+1}-\textbf{P}_{k}||^{2} &\leq C n^{-2} \mathbb{E} \left[ ||\textbf{Q}_{k}||\mathbb{E}_{k}||\tilde{\textbf{c}}_{2k}^{2} + ||\textbf{P}_{k}||^{2}\right], \\
        &\leq C n^{-2} \mathbb{E} \left[1 + ||\textbf{Q}_{k}||^{2} + ||\textbf{P}_{k}||^{2}  \right], \\
        &\leq C(T)n^{-2}.
    \end{split}      
    \end{equation}
    This finishes the proof for $\textbf{P}_{k}$. \\
    For $\textbf{Q}_{k}$, we have 
    \begin{equation}
    \begin{split}
        &\mathbb{E}||\textbf{Q}_{k+1} - \textbf{Q}_{k}||^{2}, \\
        &\leq \frac{\tau^{2}}{n^{2}} \mathbb{E} \left[ ||\textbf{c}_{k}||^{2}f_{k}^{2} + ||\textbf{U}^{T}\textbf{a}_{k}||^{2}f_{k}^{2} + ||\textbf{P}_{k}||^{2} ||\tilde{\textbf{c}}_{2k}||^{2} \tilde{f}_{2k}^{2} + ||\textbf{U}^{T}\tilde{\textbf{a}}_{2k}||^{2} \tilde{f}_{2k}^{2} + ||\textbf{Q}_{k}||^{2}||\textbf{H}_{k}||^{2}\right], \\
        &\leq Cn^{-2} \left[ 1 + \sqrt{\mathbb{E}||\textbf{U}^{T}\textbf{a}_{k}||^{4}}\sqrt{\mathbb{E}f_{k}^{4}} + \sqrt{\mathbb{E}||\textbf{U}^{T}\tilde{\textbf{a}}_{k}||^{4}}\sqrt{\mathbb{E}\tilde{f}_{k}^{4}} + \mathbb{E}||\textbf{Z}_{k}||^{2} + \mathbb{E} ||\textbf{S}_{k}||^{2}\right], \\
        &\leq Cn^{-2} \left[ 1 + \mathbb{E}||\textbf{Z}_{k}||^{2} + \mathbb{E}||\textbf{S}_{k}||^{2}\right], \\
        &\leq C(T)n^{-2}.
    \end{split}
    \end{equation}
    where in the last line, we used the previously calculated values for $\mathbb{E}||\textbf{Z}_{k}||^{2}$ and $\mathbb{E}||\textbf{S}_{k}||^{2}$. The values $f_{k}$ and $\tilde{f}_{2k}$ are the values of $f = F'$ and $\tilde{f} = \tilde{F}'$ evaluated on the corresponding inputs.\\
    The conditions for the rest of the macroscopic states can be shown in the same way.
\end{proof}

Given the previous lemmas, the proof of the theorem then follows immediately from Lemma \ref{lemma:key_result}.

\end{document}